\def\BibTeX{{\rm B\kern-.05em{\sc 
 i\kern-.025em b}\kern-.08em
    T\kern-.1667em\lower.7ex\hbox{E}\kern-.125emX}}
\begin{document}
\renewcommand{\algorithmicrequire}{\textbf{Input:}}
\renewcommand{\algorithmicensure}{\textbf{Output:}}
\newtheorem{theorem}{Theorem}
\newtheorem{observation}{Observation}
\newtheorem{definition}{Definition}
\definecolor{revise}{rgb}{0,0,0} 
\title{\texttt{Lumos}: Heterogeneity-aware Federated Graph Learning over Decentralized Devices}


\author{\IEEEauthorblockN{Qiying Pan\IEEEauthorrefmark{1}
Yifei Zhu\IEEEauthorrefmark{1}\IEEEauthorrefmark{2}\IEEEauthorrefmark{4}\thanks{\IEEEauthorrefmark{4} Corresponding author}
Lingyang Chu\IEEEauthorrefmark{3}}
\IEEEauthorblockA{\IEEEauthorrefmark{1}UM-SJTU Joint Institute,
Shanghai Jiao Tong University, Shanghai, China}
\IEEEauthorblockA{\IEEEauthorrefmark{2}Cooperative Medianet Innovation Center (CMIC), Shanghai Jiao Tong University, Shanghai, China}
\IEEEauthorblockA{\IEEEauthorrefmark{3}Department of Computing and Software, McMaster University, Hamilton, Canada}
\IEEEauthorblockA{\{sim10\_arity, yifei.zhu\}@sjtu.edu.cn, chul9@mcmaster.ca}
}

\maketitle
\begin{abstract}
Graph neural networks (GNN) have been widely deployed in real-world networked applications and systems due to their capability to handle graph-structured data. However, the growing awareness of data privacy severely challenges the traditional centralized model training paradigm, where a server holds all the graph information. Federated learning is an emerging collaborative computing paradigm that allows model training without data centralization. Existing federated GNN studies mainly focus on systems where clients hold distinctive graphs or sub-graphs. The practical node-level federated situation, where each client is only aware of its direct neighbors, has yet to be studied. In this paper, we propose the first federated GNN framework called \texttt{Lumos} that supports supervised and unsupervised learning with feature and degree protection on node-level federated graphs. We first design a tree constructor to improve the representation capability given the limited structural information. We further present a Monte Carlo Markov Chain-based algorithm to mitigate the workload imbalance caused by degree heterogeneity with theoretically-guaranteed performance. Based on the constructed tree for each client, a decentralized tree-based GNN trainer is proposed to support versatile training. Extensive experiments demonstrate that \texttt{Lumos} outperforms the baseline with significantly higher accuracy and greatly reduced communication cost and training time.
\end{abstract}

\begin{IEEEkeywords}
Federated learning, graph neural network, workload balance, data heterogeneity
\end{IEEEkeywords}

\section{Introduction}
Graph-structured data are common in the real world. They can effectively represent complex relations in  social networks, communication networks, and transportation networks. 
Unlike typical data with simple structures, the complex graph topology underlying the graph data sophisticates the corresponding prediction tasks. Graph representation learning addresses the problem by reducing complex graphs into low-dimensional vectors. 
Graph neural networks (GNN) have drawn significant attentions in recent years due to their powerful representativeness. Its outstanding performance relies upon message passing among neighboring vertices and the powerful expressiveness of the neural network. 
GNN has been widely used in networked systems and applications, like gross merchandise value forecast in e-commerce graph\cite{9835292}, item recommendation in knowledge graphs\cite{9835387} and docked bike prediction in spatial-temporal bike-sharing networks\cite{9835338}.

The growing awareness of data privacy and the formal establishment of regulations such as GDPR\cite{EUdataregulations2018} and CCPA\cite{CAdata} motivate a new paradigm called Federated Learning (FL)\cite{mcmahan2017communication}. Instead of training neural network models on a server that stores all the data, FL trains models on local devices without uploading private data to the server.
The server only receives the individually trained models and aggregates them to derive a global one.  Along with efforts to protect typical sequence-like or grid-like data, e.g., texts and images, topological information such as node degree in graph data also calls for stricter privacy protection as they usually embed essential personal information. 

Federated GNN aims to collaboratively train the GNN with privacy preservation of graph data.
Existing federated GNN studies can be categorized into the following three types, depending on how much graph information each device has:
(1) Graph-level separation \cite{xie2021federated,chen2021fedgl} allows each client to have a complete graph with common nodes but different node features and edges. It usually models the collaboration among different organizations with different graph data. (2) Subgraph-level separation \cite{zhang2021subgraph,chen2021fedgraph,du2022federated} splits a graph into small subgraphs and each device stores one subgraph. It is applicable in recommendation user-item networks. (3) Node-level separation \cite{mei2019sgnn,sajadmanesh2021locally} only allows each device to hold its own ego-network, namely, each device is only aware of the connection information between itself and its direct neighbors. It is the most natural form in sensor networks, social networks, and the emerging decentralized applications \cite{decentralizedsocial}. 
Different levels of separation demand specific designs of federated systems. The limited availability of the structure information in a node-separated graph greatly hinders the capacity of the graph learning.
Up till now, 
although existing federated GNN models are capable of learning graph representations while protecting local feature data in node-separated settings, they all leave the critical local node degree  exposed\cite{mei2019sgnn,sajadmanesh2021locally}.


This paper aims to fill the gap to fully protect local feature and node degree in the node-level federated setting. The challenges come from three perspectives. First, \textit{learning}.  Inaccessibility of the complete graph information significantly limits the structural awareness of each device. Obtaining representative node embeddings in such a strict environment calls for a new federated GNN design.
Second, \textit{privacy}. Node features, as well as node degree, all require privacy protection.
How to protect these information from other clients and the central server, preferably being theoretically guaranteed, is non-trivial.
Last but not least, \textit{degree heterogeneity.} Skewed degree distribution in real-world graphs lead to heterogeneous sizes of ego networks across different devices. Directly computing on these ego networks makes certain devices stragglers, and thus prolongs the whole training process. 
How to mitigate the effect of stragglers caused by the degree heterogeneity should be addressed.


This paper introduces \texttt{Lumos}, a federated GNN framework over a decentralized node-separated graph with both features and degrees protected. \texttt{Lumos} comprises a \textcolor{revise}{heterogeneity}-aware tree constructor and a tree-based GNN trainer. The first module, a \textcolor{revise}{heterogeneity}-aware tree constructor, is judiciously designed to construct a tree-structured graph with virtual nodes from the original ego network stored in each device. The vertices\footnote{We use vertex and node interchangeably in this paper.} involved in the tree are wisely selected by a Monte Carlo Markov Chain (MCMC)-based algorithm to balance the computing workload across different devices. 
Each device then considers the constructed tree as a graph to represent GNN layers. 
The tree-based GNN trainer is further proposed to train those trees to learn the representation of vertices in the global graph with privacy preserved. 
A fully decentralized inter-device communication scheme facilitates the aggregation of the embeddings of the same nodes in different devices. Devices then use the aggregated embeddings to compute the loss and improve the neural network. 

In summary, our contributions are as follows:
\begin{itemize}
    \item We propose the first federated GNN framework in the node-separated setting to efficiently support supervised and unsupervised GNN training with degree protected.
    \item A novel tree structure is designed to improve graph representation for each device with only local information.  A tree-based GNN trainer is further proposed with the privacy of feature and degree  protected.
    \item We propose a MCMC-based iterative algorithm to mitigate the workload imbalance problem caused by the newly identified degree heterogeneity. The performance of our algorithm is theoretically guaranteed. 
    \item Extensive experiments demonstrate that \texttt{Lumos} significantly outperforms the federated baseline by a 39.48\% accuracy increase, reducing 35.16\% of inter-devices communication rounds and 17.74\% of training time.
\end{itemize}

The structure of this paper proceeds as follows: We first present a basic introduction to deep graph learning in Section \ref{sec:pr}. Then we review the related works in Section \ref{sec:rw}. In Section \ref{sec:ov}, we provide an overview of the whole system. Section \ref{sec:tc} introduces the first module of \texttt{Lumos}, \textcolor{revise}{heterogeneity}-aware tree constructor; Section \ref{sec:gnn} introduces the other part, tree-based graph neural network training in details. Results based on extensive experiments are presented in Section \ref{sec:ev}, followed by the conclusion  in Section \ref{sec:con}.

\section{Preliminaries on Graph Neural Networks  \label{sec:pr}}
In this section, we present a preliminary introduction to the centralized deep graph learning algorithms. 
GNN has been widely studied in recent years due to its great power in processing graph-structured data \cite{welling2016semi,velivckovic2018graph,lei2019gcn}. It embeds nodes through the aggregation of neighboring embeddings and self-embeddings. The resulting embeddings serve as the representative features to help a wide range of high-level downstream tasks in networked systems, like e-commerce \cite{9835292}, recommendation system\cite{9835387},  transportation networks\cite{9835338}, and user behavior networks\cite{ling2022malgraph}, to name a few.


Graph learning assigns low-dimensional vectors that extract the local features and structural topology to all the vertices. Given a graph $G=(V,E)$ with $\mathbf{X}=(\mathbf{x_v})_{v\in V}$ where $\mathbf{x_v}\in \mathbb{R}^{D}$ refers to the vertex feature of $v$. Graph learning is defined with an encoder and a decoder. The encoder is a mapping from vertices to low-dimensional vectors, 
\begin{equation}
    \text{ENC}:V\rightarrow \mathbb{R}^d.
\end{equation}
In GNN, the encoder is a sequence of message passing layers where each layer is defined as 
\begin{equation}
    \mathbf{h_v^i}=\text{AGG}_i(\mathbf{h_v^{i-1}},\{ \mathbf{h_u^{i-1}}|u\in \mathcal{N}(v)\} ),
    \label{eq:agg}
\end{equation}
where $\mathbf{h_v^i}$ is the embedding output of the $i$th layer, AGG refers to the aggregation function, and $\mathcal{N}(v)$ is the neighbor set for a specific $v$. The initial embeddings $\mathbf{h_v^0}$ are usually set as the local features $\mathbf{x_v}$. Each layer aggregates the embeddings of one vertex and its neighbors so that the generated vector extracts not only local features but also features from neighbors. 
The decoder, denoted as DEC, is a function that reconstructs specific graph data from embeddings. For vertex-related tasks, e.g. node classification, the decoder reconstructs the vertex data from the embeddings.
\begin{equation}
    \text{DEC}(\mathbf{h_v})=\text{READ}(\mathbf{h_v}),
\end{equation}
where READ refers to single or multi-layer perceptrons.
For edge-related tasks, e.g. link prediction, the decoder is usually pairwise, involving edge-related statistics,
\begin{equation}
    \text{DEC}(\mathbf{h_u},\mathbf{h_v})=\text{READ}(\mathbf{h_u}\cdot \mathbf{h_v}).
\end{equation}
 The graph learning problem can be formulated in vertex-related tasks as 
\begin{equation}
    \min_{w_\text{ENC}}\sum_{v\in V} \mathcal{L}(\text{DEC}(\mathbf{h_v}),y_v),
\end{equation}
where $w_\text{ENC}$ are all the parameters in the encoder function, and $\mathcal{L}$ is a loss function to measure the error of the models to achieve the construction goal.
Similarly, in edge-related tasks, the formulation is
\begin{equation}
    \min_{w_\text{ENC}}\sum_{u,v\in V} \mathcal{L}(\text{DEC}(\mathbf{h_u},\mathbf{h_v}),y_{(u,v)})
\end{equation}
where $y$ can be manual labels or a binary number representing the existence of graph elements. 
\section{Related Work\label{sec:rw}}
In this section we review the current works from three aspects: federated learning, graph neural networks, and federated graph learning. 
\subsection{Federated Learning}
The Federated learning paradigm has advanced in the last few years since it successfully tackles the data privacy issue in machine learning algorithms\cite{mcmahan2017communication}. Through the collaboration of neural network training, federated learning trains models without the centralization of raw datasets. However, federated learning suffers performance degradation due to various types of heterogeneity, like data distribution  heterogeneity\cite{zhao2018federated} and resource heterogeneity\cite{nishio2019client}.
To address these types of heterogeneity, a federated learning system usually modifies the default model aggregation protocol or client sampling strategy through heuristic algorithms, reinforcement learning, and clustering algorithms. 
In \cite{li2019convergence}, researchers theoretically prove that learning rate decay can resolve the non-iidness of data. In \cite{zhan2020experience}, researchers propose an experience-driven algorithm based on deep reinforcement learning to solve a mobile computational resource heterogeneity problem in federated learning. In \cite{abad2020hierarchical}, a clustering-based FL framework is proposed to tackle the network communication heterogeneity problem. However, the characteristics of graph data lead to a new type of heterogeneity, like the skewed distribution of node degrees and different levels of graph sparsity, which is difficult to handle using previous approaches. 
\subsection{Graph Neural Networks}
Graph neural networks have emerged as an efficient tool for graph learning. Various neural architectures are introduced to generate node embeddings and fulfill graph-related tasks, such as GCN\cite{welling2016semi}, GAT\cite{velivckovic2018graph}, and GraphSage\cite{hamilton2017inductive}. However, these GNN architectures suffer from limited expressiveness due to the message-passing framework in vanilla GNNs, validated by the incapability to distinguish non-isomorphic graphs\cite{xu2018powerful} and difficulty in approximating combinatorial problems\cite{sato2019approximation}.  

To overcome this shortage, scientists have modified the message-passing framework so that features can be extracted and aggregated in a new manner. One way is to reconstruct the graph structure by adding virtual nodes. In \cite{jin2018junction}, researchers append virtual nodes to the graph to form junction trees to improve the expressive power of GNN. In \cite{talak2021neural}, a new GNN architecture called Neural Tree operates message passing on H-tree constructed from the original graph by linking virtual nodes to actual nodes. 
Similar to these studies, in our problem, the limited expressive power of vanilla GNN architectures makes embedding nodes in small ego networks inaccurate. Therefore, we also modify the ego network to improve representativeness.

\subsection{Federated Graph Learning}
Federated graph learning collaboratively trains the GNN in a privacy-preserving way. True structural information and raw feature information are not allowed to share with other devices and the server.
Separation of graph statistics across different devices greatly impedes the model training and affects the GNN's representativeness.
Various frameworks have been proposed under different levels of separation to tackle the problem.

In graph-level separation, each client owns one or more complete graphs with common vertices but different features or connection information. GCFL in \cite{xie2021federated} clusters local graphs using gradients of GNN networks to improve overall performance. In \cite{chen2021fedgl}, FedGL takes advantage of prediction results and node embeddings to facilitate global graph self-supervision to address varying node label distribution, different graph sparsity, and non-iid node feature problems. 

For subgraph-level separation, each device owns a subgraph of the global graph. FedSage+ in \cite{zhang2021subgraph} utilizes a missing neighbor generator on top of the system to deal with missing links across subgraphs.  In \cite{chen2021fedgraph}, FedGraph uses a novel cross-client convolution operation to address feature data sharing among devices and an intelligent graph sampling algorithm based on deep reinforcement learning to balance training speed and accuracy. Researchers in  \cite{du2022federated} propose a practical federated graph learning system that deals with the trade-off among GCN convergence error, wall-clock runtime, and neighbor sampling interval. 

For node-level separation, each client only owns an ego network from itself. Namely, each client is only aware of its direct neighbors and the edges to its neighbors.
SGNN in \cite{mei2019sgnn} uses one-hot encoding to encrypt features and introduces a new node distance calculation method based on an ordered degree list to process structural information.  
LPGNN in \cite{sajadmanesh2021locally} encrypts local features using a modified one-bit encoder and a robust training framework to increase inference accuracy in the presence of noisy labels. These works focus on protecting local features, but leaves the edge information exposed. A recent work, FedWalk in \cite{10.1145/3534678.3539308}, modifies Deepwalk method in centralized settings to protect edge information. However, the system is limited in representativeness as GNN outperforms DeepWalk. We focus on designing a federated GNN system while protecting both feature and degree. We also reveal a new workload imbalance problem caused by degree heterogeneity in node-separated setting, and propose a theoretically-proved approximation algorithm to solve it. 


\section{\texttt{Lumos} overview\label{sec:ov}}
In this section, we first describe the problem to solve. We then  provide a general overview of \texttt{Lumos} framework. 

\subsection{Problem Description}
For generality, we assume a networked system with $|V|$ end devices and a server.
We can index the vertices with positive integers $1,2,\cdots ,|V|$. Each device is represented by a vertex $v$ in a graph and devices that have a relationship to each other form an edge. Each device further contains local features that are generated by itself. We can represent the underlying system as a graph $G=(V,E)$, where $V$ denotes the total vertex set and $E$ denotes the total edge set. 
Each device holds its ego network denoted as $\mathcal{E}(v)$ containing all the neighboring vertices\footnote{We use device $v$ to represent the device storing $\mathcal{E}(v)$ in the following.}. $\mathcal{E}(v)$ only contains feature $\mathbf{x_v}$ and label $y_v$ without any node information related to other vertices.  
For example, for a decentralized social networking application, each device relates to a social network account. The social relationship in such an application can be represented as edges. The generated texts, images of each client is the local feature.

The system needs to calculate the node embedding for each device with both the local feature and degree protected from other devices and the server. We can formulate the goal of the system as 
\begin{equation}
    \min_{w_{ENC}} \sum_{v\in V}\sum _{u\in \mathcal{N}(v)}\mathcal{L}(\text{DEC}(\mathbf{h_u}),y_u)
\end{equation}

Inter-device communications are allowed and necessary to accomplish the goal of the system. However, all the devices follow the semi-honest adversary model. In other words, they may infer the private information of others from received data without violating the system protocol. \label{privacy_def}\textcolor{revise}{More specifically, the local features should all be preserved with $\epsilon-$ local differential privacy\cite{dwork2006differential} defined formally in Definition \ref{def:dp}. 
\begin{definition}
\label{def:dp}
Let $\epsilon\in \mathbb{R}^+$ and $\mathcal{R}$ be a randomized mechanism. The mechanism $\mathcal{R}$ is said to preserve $\epsilon-$ local differential privacy on feature $\textbf{x}$ if for any feature pair $(\textbf{x}, \textbf{x'})$,
\begin{equation}
    \text{Pr}\left[\mathcal{R}(\textbf{x})=\textbf{y}\right]\leq e^{\epsilon}\text{Pr}\left[\mathcal{R}(\textbf{x'})=\textbf{y}\right]
\end{equation}
\end{definition}
The node degree follows a zero-knowledge protocol\cite{10.1145/3335741.3335750}, a method by which one party (the prover) can prove to another party (the verifier) that something is true without revealing any information apart from the fact that this specific statement is true. 
The zero-knowledge protocol in our federated system concerning node degree is defined in Definition \ref{def:zkp}. 
\begin{definition}
\label{def:zkp}
Let $u$ and $v$ be two devices that need to compare their node degrees to distribute workloads. A protocol is said to protect the privacy of node degrees under zero-knowledge protocol if one device only knows the comparison result but does not know any additional information about the other device's node degree.
\end{definition}
In our scenario, we use zero-knowledge protocol to protect node degree information when balancing device workload based on the comparison of node degrees.
Specifically, we design a zero-knowledge protocol to compare the degrees of the two devices. The result of the comparison is whether the node degree of one device is bigger or smaller than the other device, which is known to both devices. Other than this result, each device will not know any additional information about the other device's node degree.
}

\textcolor{revise}{After the formal description of our learning goal and the privacy requirements, we next present the definition of degree heterogeneity in the node-level federated graph learning system and its implications upon federated GNN model training. \label{heter_def}
}\textcolor{revise}{
\begin{definition}[Degree Heterogeneity]
Degree heterogeneity refers to skewed degree distribution in real-world graphs which leads to heterogeneous sizes of ego networks across different devices. 
\label{ob:dh}
\end{definition}}
\textcolor{revise}{
Given that the ego networks are of varying sizes, the workload distribution across different devices is skewed in the node-level federated graph learning system. Since the devices in practice usually have limited computing power and storage resources, directly computing on these ego networks makes some devices run slower than others, also known as stragglers in distributed systems, and thus prolongs the whole training process. Furthermore, the imbalanced workload caused by degree heterogeneity also introduces high computation costs on certain edge devices. Hence, the system should be wisely designed by reducing the size of large local graphs to avoid excessive communication and computation costs.}
\subsection{\texttt{Lumos}: Solution Overview}
\begin{figure}[!h]
\centering
\includegraphics[width=3.45in]{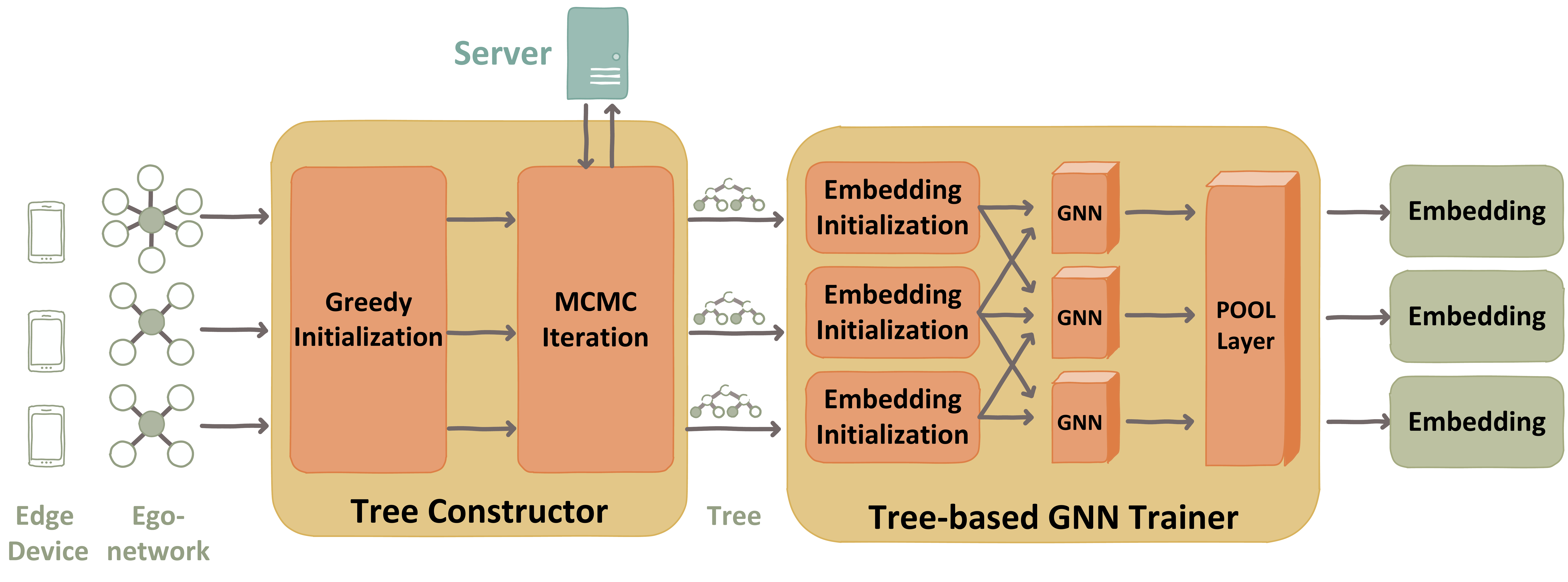}
\caption{\texttt{Lumos} overview: unique blocks (greedy initialization, MCMC iteration and POOL layer) represent parts involving inter-device communication and duplicated blocks (embedding initialization and GNN) represent parts operated locally.}
\label{fig:overview}
\end{figure}

As shown in Fig.\ref{fig:overview}, \texttt{Lumos} comprises two modules, a \textcolor{revise}{heterogeneity}-aware tree constructor and a tree-based GNN trainer. Except the MCMC iteration where server-device communication is required, the rest parts in \texttt{Lumos} are fully decentralized without  server coordination. \texttt{Lumos} is a synchronize federated framework that operates in rounds and has to receive all the required updates to start the next round. 

The original ego network is not expressive enough to train GNN layers as most ego networks contain a minimal number of vertices\cite{clauset2009power}. The tree-constructor converts $\mathcal{E}(v)$ in each device into a particular tree, denoted as $\mathcal{T}(v)$, with leaves corresponding to the vertices in the subgraph and virtual internal nodes. 
\textcolor{revise}{However, the degree heterogeneity introduced in Definition \ref{ob:dh} incurs a heavy workload imbalance problem in the following GNN trainer, slowing the overall completion time for each training epoch.} 
Since each edge exists in two subgraphs, one subgraph can remove that edge to reduce the number of leaves in the tree. The tree constructor uses a greedy initialization and an MCMC iteration to trim trees so that the numbers of leaves are balanced across devices. \textcolor{revise}{During the tree trimming process, the workload values are encrypted to follow the zero-knowledge protocol for comparison. }

A tree-based GNN trainer trains a particular GNN network with the constructed trees. An embedding initialization encrypts features with local differential privacy because of privacy concerns. 
Then the device transmits the features to their neighboring devices. Each device trains GNN layers upon its tree using these encrypted features as initialized embeddings. The layers generate representations for all the leaves associated with vertices in the graph. 
After that, a POOL layer gathers all the embeddings of leaves in multiple trees corresponding to the same vertex in the global graph to generate the vertex embeddings through inter-device communications. 

The system uses multiple approaches \textcolor{revise}{to meet the requirement of privacy}. In the tree-constructor, the exchange of workload statistics implies the degrees of the clients. Hence, the tree-constructor deploys an integer comparison encryption protocol so that the information is exchanged under security. As mentioned, embedding encryption prevents node feature leakage in the graph neural network trainer. Moreover, the system guarantees that labels of nodes are used locally without joining the inter-device communication. 

\section{\textcolor{revise}{Heterogeneity}-Aware Tree Constructor\label{sec:tc}}
In this section, we introduce the first part of \texttt{Lumos}, heterogeneity-aware tree constructor. We first present the intuitive idea of tree construction and identify the potential workload imbalance issue due to degree heterogeneity in the system. Then we formulate the workload balance problem. Last, we propose a tree-trimming solution to the problem. 
\subsection{Initial Tree Construction}
The idea of the tree construction is to insert some virtual nodes into the ego network, forming a particular tree to strengthen its expressiveness. Thus, for device $v$, its tree $\mathcal{T}(v)$ is combined with real vertices in the ego network and virtual nodes. The leaves in $\mathcal{T}(v)$ correspond to the real vertices in $\mathcal{E}(v)$ while the internal nodes are virtual without a direct relationship to the real vertices. The tree is constructed from the bottom to the top. We first create leaf pairs. A leaf pair $(v,u)$ is consisted of the vertex $v$ itself and one of its selected neighbor $u\in \mathcal{N}(v)$. 
\textcolor{revise}{\label{vr}Here, node $v$ is replicated  $|\mathcal{N}(v)|$ times so that the only non-noised feature in this ego-graph\footnote{In subsection \ref{encoder}, it will be introduced why and how other features are noised due to privacy restriction.} is utilized more in later training. Then we add a common parent node to the two leaves in each pair to join them. Last, we use a root node to make all the parent nodes children of the root node in the tree. This tree architecture includes virtual nodes representing subgraphs in the original graph. The virtual root node represents the whole ego network, and each virtual parent node of leaves refers to a two-vertex subgraph, including its two child vertices and the edge between them. These virtual nodes allow the tree to express more structural information about the original ego network. A general version of this tree architecture can learn any graph compatible function and have great expressive power \cite{talak2021neural}. }

\begin{figure}[!h]
\centering
\subfloat[Ego network of vertex 1]{\includegraphics[width=1.2in]{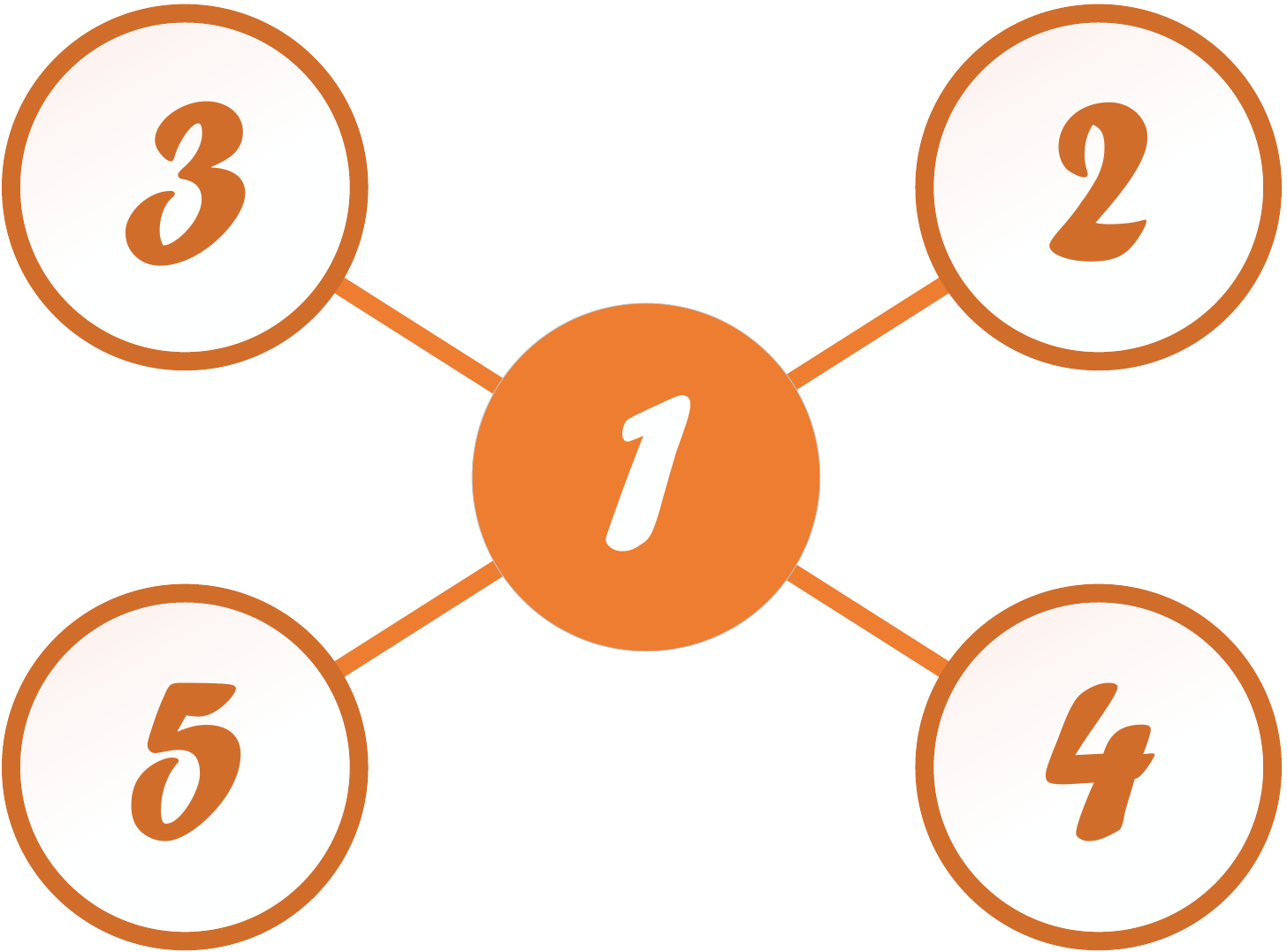}
\label{fig:subgraph}}
\hfil\hspace{-0.5cm}
\subfloat[Constructed tree]{\includegraphics[width=1.8in]{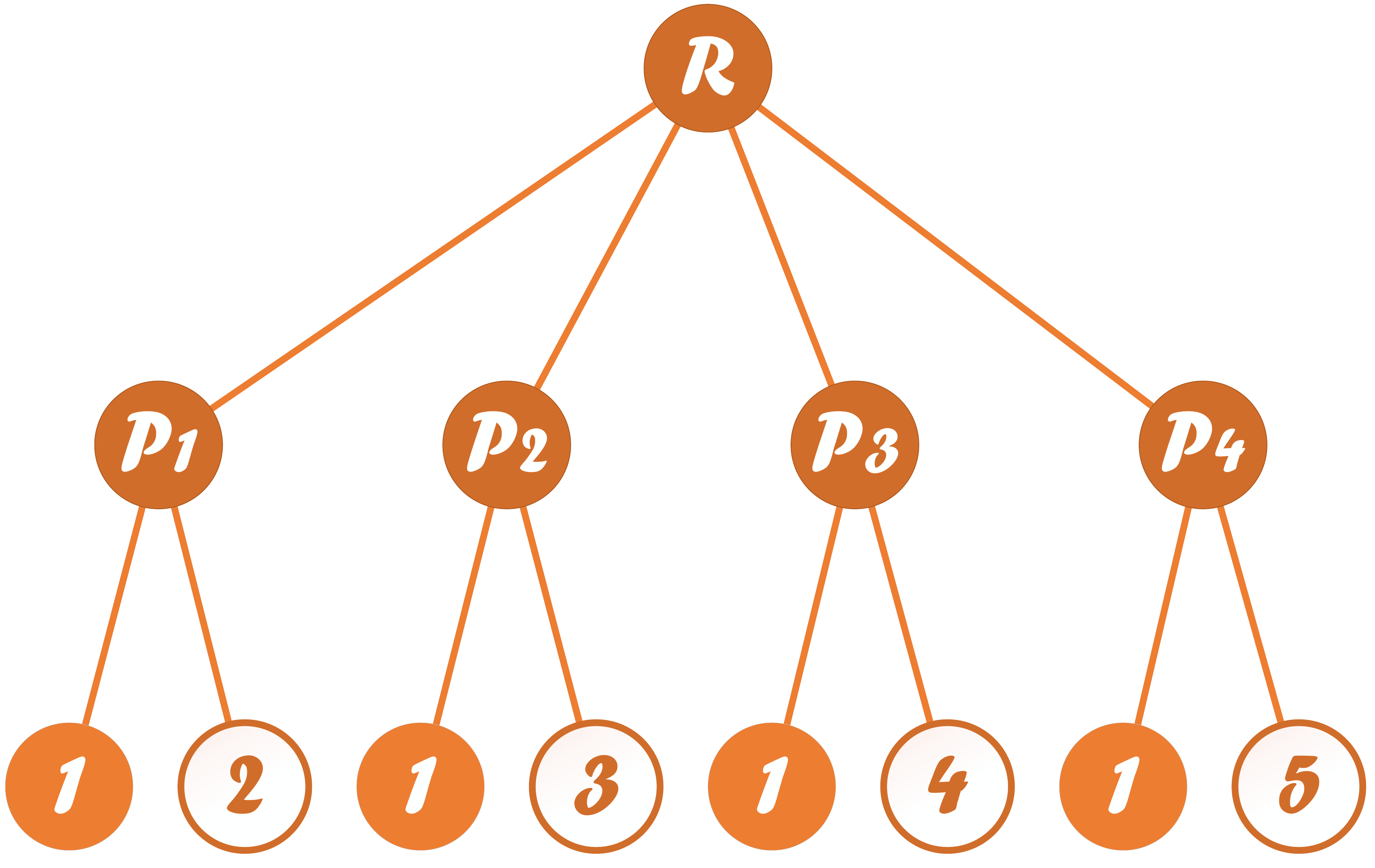}
\label{fig:ctree}}
\caption{Constructed tree for a given ego network}
\label{fig:tree}
\end{figure}
We further use the ego network generated by vertex $1$ in Fig.\ref{fig:tree} as an example to illustrate this tree construction process. First we form 4 leaf pairs $(1,2),(1,3),(1,4),(1,5)$ by joining vertex $1$ and its neighbors together. Then we add four parent nodes $P_1$, $P_2$, $P_3$, and $P_4$ to these leaf pairs accordingly. Lastly, we use a root node $R$ to connect these four parent nodes to build a tree. \textcolor{revise}{Then, we use the same example to explain why \texttt{Lumos} constructs a tree in this way. Since vertex 1 and 2 are symmetric in the subgraph formed with vertex 1, 2, and edge (1,2) in the original ego network, node $P_1$, as a parent of leaves vertex 1 and 2, represents the corresponding subgraph structure accurately. Similarly, nodes $P_2$, $P_3$, and $P_4$ exactly represent their related two-vertex subgraphs. As the root node $R$ is the parent of these four virtual nodes representing all the symmetric two-vertex subgraphs in the ego network, $R$ precisely represents the whole ego network structure.  }

\textcolor{revise}{Although the constructed tree is expressive, it suffers from workload imbalance problem due to degree heterogeneity defined in Definition \ref{ob:dh}.} Some devices become stragglers and slow down the overall learning speed in the distributed system. Worse, it may make one edge device crash down, given that it needs to learn a tree-structured graph of more than thousands of nodes. These potential problems drive us to design a heterogeneity-aware tree constructor.
\subsection{\textcolor{revise}{Workload Balancing Problem Formulation}}
We can use a 0-1 integer programming model to formulate the workload balancing problem. The workload of each device depends on the number of leaves of the tree. The decision variable is a matrix $X\in [0,1]^{|E|}$. 
\begin{equation}
\begin{aligned}
&\text{For any edge } e=(u,v) \in E, \\
&x_e=\left\{ 
\begin{aligned}
    &1,&\text{device $u$ includes neighboring vertex $v$ in its tree}\\
    &0,&\text{otherwise}
\end{aligned}
    \right.
\end{aligned}
\end{equation}
Since the number of leaves in the tree is equal to $2\sum_{e\in E}x_{e}$ for each device due to the leaf pairs introduced above, we define $f(x)$ as the maximum workload in the graph, namely, $f(x)=\max_{u\in V} \sum_{(u,v)\in E}x_{(u,v)}$.
Hence, the \textcolor{revise}{workload balancing} problem is modeled as 
\begin{equation}
     \begin{aligned}
        &\min_{x} f(X)=\min _x\max_{u\in V} \sum_{(u,v)\in E}x_{(u,v)}&\\
        s.t.\quad&x_{(u,v)}+x_{(v,u)}\ge1,\forall (u,v)\in E& \\
        &x_{(u,v)},x_{(v,u)}\in \{ 0,1\},\forall (u,v)\in E& 
    \end{aligned}
    \label{eq:problem}
\end{equation}
The objective function is a min-max function, forbidding the existence of a device that undertakes excessive workload in the whole system. The constraint equation guarantees that every edge is represented in at least one tree. However, this problem is non-trivial to solve. Specifically, we present Theorem \ref{th:np} to show its hardness. 

\begin{theorem}
\label{th:np}
The \textcolor{revise}{workload balancing  problem} defined in (\ref{eq:problem}) is NP-hard. 
\end{theorem}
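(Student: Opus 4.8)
The plan is to prove NP-hardness by a polynomial-time many-one reduction to the decision version of (\ref{eq:problem}): given $G=(V,E)$ and a bound $k$, decide whether some feasible $x$ attains $f(X)\le k$. Since the objective is a min--max load, the natural source is a partition/scheduling-style balancing problem, so I would reduce from an NP-complete \emph{graph balancing} instance, where one orients the edges of a weighted multigraph so as to minimize the maximum weighted in-degree. The first step is to fix the dictionary between the two problems: reading $x_{(u,v)}=1$ as ``device $u$ is charged for edge $(u,v)$'', the objective $f(X)=\max_{u}\sum_{(u,v)\in E}x_{(u,v)}$ is exactly the maximum charge over all devices, and the covering constraint says that every edge is charged at least once.

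Next I would observe that charging an edge to \emph{both} endpoints can only raise $f(X)$, so every optimum is attained by a \emph{minimal} solution in which each edge is charged to exactly one endpoint; hence (\ref{eq:problem}) coincides with an edge-orientation problem whose cost is the maximum out-degree. I would then, for each weighted item of the source instance, build a gadget subgraph whose internal edges admit essentially one ``free'' orientation, forcing a prescribed amount of charge onto a designated boundary vertex, and glue the gadgets so that a global orientation of maximum charge $\le k$ exists if and only if the source instance is a yes-instance. The forward direction is the routine half: given a witnessing assignment, orient each gadget accordingly and check that every device's charge stays at most $k$.

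The main obstacle is the backward direction, and it is a real one. Stripped of weights, the formulation in (\ref{eq:problem}) is exactly \emph{minimum-maximum-out-degree orientation}, whose optimum equals $\lceil \max_{S\subseteq V}|E(S)|/|S|\rceil$ and is computable in polynomial time by a densest-subgraph / maximum-flow argument (Hakimi; Frank--Gy\'arf\'as), so a reduction that produces only unit edge-charges cannot possibly yield hardness. The crux is therefore to design gadgets that make the per-device charges genuinely \emph{indivisible and nonuniform} --- so that the flow relaxation can no longer round a fractional split into an integral one at zero cost --- and then to prove that any feasible $x$ with $f(X)\le k$ decodes into a valid assignment for the source problem. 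Concretely, the heart of the argument is to pin down which extra feature of the workload model of Section~\ref{sec:tc} (edge weights, device capacities, or coupling among a single device's charges) pushes the problem out of the tractable unit-orientation regime, and to carry that feature faithfully through the gadget so the backward decoding goes through.
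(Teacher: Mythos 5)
Your proposal does not prove the theorem, but your diagnosis of \emph{why} it cannot be completed is exactly right, and the defect lies in the statement itself rather than in your reduction skills. As you observe, since charging an edge to both endpoints never lowers the min--max objective, problem (\ref{eq:problem}) is precisely minimum maximum out-degree orientation: assign each edge to exactly one endpoint so as to minimize the largest number of edges any vertex receives. That problem is polynomial-time solvable --- binary search on the target load $k$, then test feasibility by a flow network with unit capacity from the source to each edge-node, unit capacities from each edge-node to its two endpoints, and capacity $k$ from each vertex to the sink; equivalently, the optimum is $\left\lceil \max_{S\subseteq V}|E(S)|/|S|\right\rceil$, as in the Frank--Gy\'arf\'as/Hakimi orientation theorems you cite. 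Consequently the formulation in (\ref{eq:problem}) cannot be NP-hard unless $\mathrm{P}=\mathrm{NP}$, and the ``extra feature'' (weights, capacities, or coupling among a device's charges) that you correctly identify as necessary for hardness simply does not exist in the model as written; no gadget can manufacture it.

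It is worth seeing how the paper's own proof evades --- rather than answers --- the obstruction you found. The paper maps the workload problem \emph{into} a min--max Colored Traveling Salesman Problem: devices become salesmen, edges become cities, and the weights and city--color matrix are chosen so that workload assignments correspond to colored tours. It then argues that single TSP is the $m=1$ special case of min--max CTSP, TSP is NP-hard, and therefore the workload problem, ``being a min--max CTSP problem,'' is NP-hard. This is the wrong-direction fallacy: exhibiting your problem as a special case of (i.e., reducing it \emph{to}) an NP-hard problem bounds its complexity from above, not below --- $2$-coloring is a special case of graph coloring yet lies in P. Moreover, the hard CTSP instances (those with $m=1$) are not of the form produced by the paper's mapping, so no hardness transfers to the instances that actually arise. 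Your methodology --- insisting on a reduction \emph{from} a known hard problem and refusing to claim a gadget exists when the unit-charge structure makes the problem flow-solvable --- is the correct one, and it exposes the theorem as false rather than your argument as incomplete.
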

\begin{proof}

We formulate our problem as a min-max Colored Traveling Salesman Problem (CTSP). In CTSP, there are $m$ salesman and $n$ cities. A digraph $\mathcal{G}=(\mathcal{V},\mathcal{E})$\footnote{We use $G=(V,E)$ to denote the graph in federated GNN and $\mathcal{G}=(\mathcal{V},\mathcal{E})$ to denote the graph in CTSP to distinguish these two graphs.} where $\mathcal{V}=\{ 1,2,\cdots ,n\}$ indicates the traveling route between cities. For edge $(u,v)\in \mathcal{E}$, it has a non-negative weight $w_{u,v}$ quantifying the traveling cost from city $u$ to city $v$. Each salesman is assigned a distinguished color. Denote the color set to be $C=\{ c_1 ,c_2 ,c_3 ,\cdots ,c_m\}$. An indicator matrix called City Color Matrix (CCM) with size $n\times m$ allocates cities with one or more colors. 
\begin{equation}
    CCM_{ij} = \left\{ 
    \begin{aligned}
    &1,&\text{city $i$ has color $c_j$}\\
    &0,&\text{otherwise}
    \end{aligned}
    \right.
    \label{eq.ccm}
\end{equation}
A salesman can only travel to cities with his corresponding color. Each salesman $k\in \mathbb{Z}_m$ starts to travel from a city $s_k$ and returns to the city in the end. The decision variable $X\in \mathbb{R}^{n\times n\times m}$ is a tensor.
\begin{equation}
x_{uvk}=\left\{
    \begin{aligned}
        &1,&\text{Salesman $k$ travels from city $u$ to city $v$}\\
         &0,&\text{otherwise}
    \end{aligned}\right.
\end{equation}
The problem aims to assign different traveling routes to salesmen with the least traveling costs. 



In our problem, we aim to balance the number of vertices to process in each device. Each device represents a salesman, and each edge is a city to visit. We define a mapping function from the edge to the city as $M:E\rightarrow \mathcal{V}$. The graph $\mathcal{G}$ in our scenario is a complete graph where all the vertices are connected. The weights are
\begin{equation}
    w_{ij}=\left\{
    \begin{aligned}
        & 1,  &  M^{-1}(i)\cap M^{-1}(j)\neq \emptyset\\
        & |V|,  & M^{-1}(i)\cap M^{-1}(j)= \emptyset
    \end{aligned}
    \right.
\end{equation}
which heavily punishes assigning a route across an inaccessible city for a salesman. 
Due to privacy concerns, for an edge $(u,v)\in E$, only devices corresponding to vertices $u$ and $v$ are allowed to process the edge. In this way, we assign every salesman with different colors. The starting point of each salesman is randomly assigned from the set of cities he can visit. CCM is defined as
\begin{equation}
    CCM_{ij} =\left\{
    \begin{aligned}
        & 1,  & j \in M^{-1}(i)\\
        & 0,  & j \notin M^{-1}(i)
    \end{aligned}\right.
\end{equation}
In this way, our problem is equivalent to a min-max CTSP problem. It is obvious that the single TSP problem can be reduced to the min-max CTSP problem when $m=1$. Since finding the exact solution to the single TSP problem is proved to be NP-hard\cite{jungnickel1999hard}, our workload balance problem, a min-max CTSP problem, is NP-hard. 
\end{proof}

\subsection{\textcolor{revise}{A Solution to Workload Balancing Problem: Heterogeneity-aware Tree Trimming}} 
Since the problem is NP-hard, finding the optimal solution on real-world, large-scale datasets is infeasible. 
Although it is difficult to find the exact solution, we propose a method to approximate the optimal solution.
Our method consists of two parts, one is a greedy initialization algorithm to trim branches into a sub-optimal tree, and the other is an iterative algorithm based on Markov chain Monte Carlo (MCMC) sampling. We prove this heterogeneity-aware tree trimming method has bounded loss compared to the optimal solution.

\paragraph{Greedy initialization} It is critical to generate an initial solution with relatively small objective values. Otherwise, it will take tremendous time for the following iterative algorithm to converge, particularly in a large-scale global graph. If two vertices are linked, but their degree difference is vast, the device with the more considerable degree trims the branch with the other device to decrease the workload difference. 

The initial solution is defined according to 
\begin{equation}
   x_{(u,v)}=\left\{
    \begin{aligned}
        & 0,  &  \text{round}(\ln \text{deg}(u))\ge \text{round}(\ln \text{deg}(v))\\
        & 1,  & \text{otherwise}\\
    \end{aligned}
    \right.
    \label{eq:is}
\end{equation}
Realizing this simple setting is non-trivial in our scenario, requiring a integer comparison to compare two degrees. Since the degree value reveals structural information of the vertex, it needs privacy preservation.
We use the 2-party protocol for integer comparison problem in CrypTFlow2\cite{rathee2020cryptflow2} due to its efficiency and decentralized setting.  \textcolor{revise}{\label{ga}Other degree protection mechanisms such as graph anonymization \cite{liu2008towards} require complete graph structure on the server. This is inaccessible in our federated scenario because it leaks the local graph neighboring information to the server.}
Taking the logarithm of degrees makes the bits of inter-transmission during secure integer comparison much smaller. It also reduces the unnecessary workload for vertex pairs with a slight degree value difference. The initial solution fills the workload gap between two devices with a significant degree difference. The initialization process can be summarized in Alg.\ref{alg:gi}. 
\textcolor{revise}{Alg.\ref{alg:gi} has time complexity $O\left(\max_v \deg(v) \mathcal{L} \log \mathcal{L}\right)$ where $\mathcal{L}$ is the number of the bits of store each degree value, as within each device, it runs a degree comparison computation for all its neighbours and $O(\mathcal{L} \log \mathcal{L})$ is the complexity of integer comparison\cite{rathee2020cryptflow2}.\label{tc} }
\begin{algorithm}
\caption{Greedy Initialization}
\begin{algorithmic}[1]
\label{alg:gi}
\REQUIRE $G=(V,E)$\\
\ENSURE Vertex sets $(N_1,N_2,\cdots, N_{|V|})$
\FOR{$u \in V$parallel}
\STATE $N_u=\{ \}$
\FOR {$v \in u$'s neighbors}
\IF {round($\ln($deg$(v)))\ge $round$(\ln($deg$(u)))$} 
\STATE $N_u = N_u\cup \{ v\}$
\ENDIF 
\ENDFOR
\ENDFOR
\RETURN $(N_1,N_2,\cdots ,N_{|V|})$
\end{algorithmic}
\end{algorithm}
Alg.\ref{alg:gi} outputs a list of vertex sets whose element $N_u$ is the remaining neighbor set for vertex $u$. The step written in Line 4 utilizes the integer comparison protocol. The output of Alg.\ref{alg:gi} will be updated using an iterative algorithm introduced below. 

\paragraph{Iterative Algorithm}
The algorithm iteratively searches for solutions with smaller objective values using MCMC sampling. We consider each solution as a state and is transitive to another using a transition approach. The transition performs the following operation. For one edge $(u,v)\in E$, and $v\in N_u$
\begin{equation}
N_u=N_u\backslash \{ v\}, N_v=N_v\cup \{u\}.
\end{equation}
It trims the branch including one neighbor $v$ from $\mathcal{T}(u)$ and adds a branch including node $u$ into  $\mathcal{T}(v)$. 

To speed up sampling, we compare the objective values of states before and after $k$ transitions by requesting the devices with the largest workloads given different solutions to compare with each other. Hence, $k$ transitions operate as 
\begin{equation}
    \forall v_i \in \{ v_1,v_2,\cdots ,v_k\}, N_u=N_u\backslash \{ v_i\}, N_{v_i}=N_{v_i}\cup \{u\}
    \label{eq:tk}
\end{equation}
where  $k$ is a random number sampling from $1$ to round$(\ln |N_u|)$. 

The current state $X_t$ will jump to the transited one $X_t'$ with a probability 
\begin{equation}
    \label{eq:tp}
    \begin{aligned}
        &Pr[X_{t+1}=X_t'|X_t]=\min (1, e^{f(X_{t})-f(X_t')})\\
        &Pr[X_{t+1}=X_t|X_t]=1-\min (1, e^{f(X_{t})-f(X_t')}),
    \end{aligned}
\end{equation}
applying the Metropolis-Hastings (MH) algorithm \cite{chib1995understanding} which performs state sampling from distributions of all kinds.  Eq.\ref{eq:tp} allows the state to be transited to a less optimal state with a minuscule probability. Occasionally, a less optimal state will lead to an optimal state with smaller transitions.

To compute $f(X_{t})-f(X')$, we use the 2-party protocol for number computation in CrypTFlow2\cite{rathee2020cryptflow2}.
\textcolor{revise}{The time complexity is $O\left( T|V| \mathcal{L} \log \mathcal{L}\right)$ where $\mathcal{L}$ is the number of the bits to store per workload integer value. For each iteration, the highest cost occurs during execution of Alg.\ref{Alg:argmax}, which needs to run no more than $|V|$ times of integer comparison of time complexity $O(\mathcal{L} \log \mathcal{L})$ on each device in parallel. }


\begin{algorithm}
\caption{MCMC Iteration}
\begin{algorithmic}[1]
\label{alg:mcmc}
\REQUIRE Vertices $V$, number of iterations $T$, initial solution $X_0=(N_1^0,N_2^0,\cdots, N_{|V|}^0)$,\\
\ENSURE Vertex sets $(N_1,N_2,\cdots, N_{|V|})$
\FOR{$t=1\rightarrow T$}
\STATE Find device $u$ with the largest workload under $X_t$ with Alg.\ref{Alg:argmax}
\STATE Device $u$ randomly samples step size $k$
\STATE Device $u$ samples $k$ vertices $\{v_1,v_2\cdots ,v_k\}$ from $N_u$
\STATE Devices $\{u, v_1,v_2\cdots ,v_k\}$ form $X_t'$ using Eq.\ref{eq:tk}
\STATE Find device $u'$ with the largest workload under $X_t'$ with Alg.\ref{Alg:argmax}
\STATE Devices $\{u, u'\}$ compute $f(X_t)-f(X_t')$
\STATE Device $u$ determines $X_{t+1}$ with Eq.\ref{eq:tp}
\STATE Device $u$ sends $X_{t+1}$ to devices $\{v_1,v_2\cdots ,v_k\}$
\ENDFOR
\RETURN $(N_1,N_2,\cdots ,N_{|V|})$
\end{algorithmic}
\end{algorithm}

Steps written in Line 2 and 6 in Alg.\ref{alg:mcmc} require finding the device with the maximum workload. Devices cannot share their workload values directly, similar in greedy initialization. Hence we need to deploy the integer comparison protocol used in Alg.\ref{alg:gi} to develop a system-coordinated module. The algorithm can be divided into two parts. The first part (Line 1-8 in Alg.\ref{Alg:argmax}) is to find a CVS, which is the abbreviation for candidate vertex set, containing all the devices which hold the largest workloads among their neighbors and themselves. Each device first checks whether it is a candidate vertex by comparing its workload with the workloads of its neighboring devices. If it undertakes the largest workload across its ego network, it sends a message to the server. After the server receives all the candidate vertices, it executes the second part. It requests all the devices in CVS to compare themselves with each other. The device with the largest workload among the candidates is expected and informs the server. If more than one device are found, the server randomly selects one as they have the same maximum workloads. Alg.\ref{Alg:argmax} presents the procedure. 
\begin{algorithm}
\caption{Finding the device with the maximum workload}
\begin{algorithmic}[1]
\label{Alg:argmax}
\REQUIRE Workload \\
$(w_1,w_2,\cdots ,w_{|V|})=(|N_1|,|N_2|,\cdots ,|N_{|V|}|)$
\ENSURE A device $v$ with the maximum workload\\
\textbf{Server operation}
\STATE Candidate Vertex Set $CVS=\emptyset$
\FOR{$v \in V$ parallel}
\STATE $ Candidate_v$ = Execute \textbf{Device $v$ operation 1}
\ENDFOR\\
\FOR {$v \in V$}
\IF {$Candidate_v$}
\STATE $CVS=CVS\cup \{ v\}$
\ENDIF
\ENDFOR
\FOR{$v \in CVS$ parallel}
\STATE Execute \textbf{Device $v$ operation 2} with $CVS$
\ENDFOR
\STATE Return the device $v$ with the maximum workload\footnotemark{}
\quad \\
\textbf{Device $v$ operation 1}\\
\STATE Compare workloads in $\mathcal{N}(v)$ with $w_v$ one by one
\STATE $Candidate=$ whether $w_v$ is the largest

\STATE Send $Candidate$ to the server
\quad \\
\textbf{Device $v$ operation 2}\\
\STATE Compare workloads in $CVS$ with $w_v$ one by one
\STATE Send the server a message whether $w_v$ is the largest 
\end{algorithmic}
\end{algorithm}
\footnotetext{If more than one devices are found, the server randomly selects one. }


\begin{theorem}
\label{theo:mcmc}
The probability of sampling a solution with $f(X)=f(X_{opt})-c|E|\ln 2$ decreases with exponential in $|E|\ln 2$. 
\begin{equation}
    Pr\left[f(X)\leq f(X_{opt})-c|E|\ln 2\right]\leq e^{-|E|\ln 2}
\end{equation}
\end{theorem}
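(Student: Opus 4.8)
The plan is to recognize the Metropolis--Hastings update in Eq.\ref{eq:tp} as a reversible Markov chain whose unique stationary distribution is the Gibbs measure $\pi(X)\propto e^{-f(X)}$, and then to control the tail of $f$ under $\pi$ by a direct counting argument. First I would verify that the acceptance rule $\min(1,e^{f(X_t)-f(X_t')})$, combined with a symmetric proposal, satisfies detailed balance with respect to $\pi$, so that ``sampling a solution'' amounts to drawing $X$ from $\pi$; note that because $\pi$ places exponentially more mass on states with small $f$, the chain concentrates near the optimum of Eq.\ref{eq:problem}. Writing $Z=\sum_{X}e^{-f(X)}$ for the normalizing constant and $f(X_{opt})=\min_{X}f(X)$ for the optimal objective, I would use the elementary lower bound $Z\ge e^{-f(X_{opt})}$ obtained by retaining only the optimal state.

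Next I would bound the tail mass. Since $f(X)\ge f(X_{opt})$ for every feasible $X$, the informative deviation event is that the sampled objective \emph{exceeds} the optimum by $c|E|\ln 2$, i.e.\ the loss of the returned solution relative to optimal. Each state in this event contributes at most $e^{-(f(X_{opt})+c|E|\ln 2)}$ to the Gibbs mass, and the number of feasible states $|\mathcal{X}|$ is at most $C^{|E|}$ for a constant $C$, since each edge contributes only a bounded number of legal configurations under the constraint $x_{(u,v)}+x_{(v,u)}\ge 1$ in Eq.\ref{eq:problem}. Combining the two bounds gives
\[
Pr_{\pi}\!\left[f(X)\ge f(X_{opt})+c|E|\ln 2\right]\le \frac{C^{|E|}\,e^{-(f(X_{opt})+c|E|\ln 2)}}{e^{-f(X_{opt})}}=C^{|E|}e^{-c|E|\ln 2},
\]
and choosing $c$ large enough that $\ln C-c\ln 2\le-\ln 2$ collapses the right-hand side to $e^{-|E|\ln 2}$, which is the claimed bound.

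The main obstacle I anticipate is justifying that the chain genuinely samples from $\pi$, rather than the counting step itself. The multi-step proposal in Eq.\ref{eq:tk}, which detaches up to $k$ branches in a single move with $k$ chosen at random, must be shown to be symmetric---or its forward/backward asymmetry folded into the acceptance ratio---for detailed balance to hold with the stated acceptance probability; otherwise the true stationary law would differ from $e^{-f(X)}/Z$. I would also need to argue that the transition scheme is irreducible and aperiodic on the feasible set, so that $\pi$ is the \emph{unique} limit and ``sampling'' is well defined. A secondary point is aligning the direction of the inequality in Theorem \ref{theo:mcmc}: since $f\ge f(X_{opt})$, the event as literally written is empty, and the substantive statement is the upper-tail bound derived above, which is precisely what certifies the bounded loss of the heterogeneity-aware tree-trimming method.
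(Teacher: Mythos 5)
Your proposal follows the same skeleton as the paper's own proof: assume the Metropolis--Hastings chain defined by Eq.~\ref{eq:tp} has converged to its Gibbs stationary distribution, bound the probability of the bad event by (number of states) $\times$ (largest Gibbs weight inside the event) divided by the weight of an optimal state, observe that the state space has exponential size (the paper writes $N=2^{c|E|}$, you write $C^{|E|}$ with $C=3$ legal configurations per edge under $x_{(u,v)}+x_{(v,u)}\ge 1$), and absorb the state count into the exponent by adjusting a constant (the paper's $c'=c+1$, your condition $\ln C-c\ln 2\le -\ln 2$). So the counting core is identical, and to that extent the approaches coincide. Where you genuinely depart from --- and improve on --- the paper is sign hygiene. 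The paper's proof takes the stationary law proportional to $e^{+f(X)}$ and bounds the lower tail $Pr[f(X)\le f(X_{opt})-c|E|\ln 2]$; this is doubly inconsistent, since the acceptance rule of Eq.~\ref{eq:tp} makes the stationary law proportional to $e^{-f(X)}$, and since Eq.~\ref{eq:problem} is a minimization, $f(X)\ge f(X_{opt})$ holds for every feasible $X$, so the event in the theorem as literally written is empty and the bound vacuous. Your upper-tail version, $Pr[f(X)\ge f(X_{opt})+c|E|\ln 2]\le e^{-|E|\ln 2}$ under $\pi\propto e^{-f}$, is the statement the theorem is evidently intended to make (bounded excess of the sampled solution over the optimum), and your derivation of it is correct conditional on stationarity. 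Finally, you flag --- but do not close --- the detailed-balance question: the proposal of Eq.~\ref{eq:tk}, which always anchors moves at the current maximum-workload device found by Alg.~\ref{Alg:argmax} and detaches a random number of branches, is not symmetric (the reverse move typically has proposal probability zero), so reversibility with respect to the Gibbs measure does not follow automatically. You should know the paper does not close this gap either; it simply asserts the chain is ``ergodic and reversible.'' In short, your proposal is at least as complete as the paper's proof, and is more faithful to what the algorithm actually samples.
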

\begin{proof}

Let $m\in [0,1]$. We have 
\begin{equation}
    Pr\left[f(X)\leq m \right]\leq \frac{Pr\left[f(X)\leq m \right]}{Pr\left[f(X)=f(X_{opt}) \right]}
\end{equation}
 Since every state can be obtained through a sequence of transitions from another state, the Markov Chain is connected. Since $Pr\left[ f(X)\right] >0$, the Markov chain is ergodic and reversible\cite{upfal2005probability}. The standard MH procedure and the property of the Markov Chain guarantee that the final solution will converge to the stationary distribution. Hence,  
 \begin{equation}
 \begin{aligned}
     Pr\left[f(X)\leq m \right]&\leq \frac{Pr\left[f(X)\leq m \right]}{Pr\left[f(X)=f(X_{opt}) \right]}\\
     &\leq \frac{N\frac{e^m}{\sum_X e^{f(X)}}}{\frac{e^{f(X_{opt})}}{\sum_X e^{f(X)}}} \leq Ne^{m-f(X_{opt})}
 \end{aligned}
\end{equation}
where $N$ is the number of states in the Markov Chain. In our scenario, $N=2^{c|E|}$ where $c$ is a constant. Let $m=f(X_{opt})-\ln (N)=f(X_{opt})-(\ln 2)c|E|-t$ where $t\in \mathbb{R}$. Then, 
\begin{equation}
\begin{aligned}
     &Pr\left[f(X)\leq f(X_{opt})-(\ln 2)c|E| -t \right]\\
     \leq& 2^{c|E|}e^{-(\ln 2)c|E|-t}=  e^{-t}.
\end{aligned}
\end{equation}
Let $t=(\ln 2)|E|$. We have 
\begin{equation}
    \begin{aligned}
     &Pr\left[f(X)\leq f(X_{opt})-c'\ln 2|E|  \right]
     \leq & e^{-|E|\ln 2}
\end{aligned}
\end{equation}
where $c'=c+1$. This completes the proof. 
\end{proof}
Theorem \ref{theo:mcmc} theoretically bounds the probability of obtaining a solution far from the optimal using our approach. 
\section{Tree-based Graph Neural Network Training\label{sec:gnn}}
In this section, we introduce the second module of \texttt{Lumos}, graph neural network trainer. 
We describe embedding initialization,  message passing and loss computing, all key components in GNN training, in details sequentially. 
In general, after balancing the computation workloads for each device, each device now has a tree-structured graph of an adequate size, which allows graph neural network training. Every device performs message passing on its tree, including all the leaves and internal nodes. It uses an unbiased encoder to encrypt local features to protect $\epsilon-$ local differential privacy. The system combines all the embeddings of leaves corresponding to the same vertex through inter-device communications to obtain the final vertex embeddings. These embeddings are then used to compute loss values to update model weights. 
\subsection{LDP Embedding Initialization}
\label{encoder}
In centralized GNN, the initial embedding of vertex $u$ is
\begin{equation}
    \mathbf{h_u^0}=\left\{
    \begin{aligned}
        & \mathbf{x_u},  &  \text{node has features}\\
        & \mathbf{0},  & \text{otherwise}\\
    \end{aligned}
    \right.
\end{equation}
Since for the tree-structured graph, the leaves correspond to the original vertex in the global graph, they are thus featured. The internal nodes are virtual nodes to facilitate training without features. Hence, we define the initial embedding of node $\mu $ in the tree as 
\begin{equation}
    \mathbf{h_\mu^0}=\left\{
    \begin{aligned}
        & \mathbf{x_u},  &  \text{node }\mu\text{ is a leaf related to vertex $u\in V$}\\
        & \mathbf{0},  & \text{otherwise.}\\
    \end{aligned}
    \right.
\end{equation}
Nevertheless, privacy concerns impede the direct publication of node features. Node features need encryption without losing much utility. We take advantage of the one-bit mechanism \cite{ding2017collecting} to send local features to other devices. Assuming that $\mathbf{x_u}\in [a,b]^d$, for the $i$th element, the probability to map the element value to $\{0,1\}$ is
\begin{equation}
\begin{aligned}
    Pr(x_{u,i}'=1)&=\frac{1}{e^{\frac{\epsilon\text{wl}(u)}{d}}+1}+\frac{x_{u,i}-a}{b-a}\frac{e^{\frac{\epsilon\text{wl}(u)}{d}}-1}{e^{\frac{\epsilon\text{wl}(u)}{d}}+1}\\
    Pr(x_{u,i}'=0)&=1-Pr(x_{u,i}'=1),
    \end{aligned}
\end{equation}
where $\epsilon$ is a hyper-parameter called privacy budget to control privacy protection and $\text{wl}(u)$ is the workload value of degree u, i.e., the number of neighbours remaining in the trimmed tree $\mathcal{T}(u)$. A smaller $\epsilon$ protects more privacy despite more information loss. 
The mapping probability $Pr(x_{u,i}'=1)$ is large when the element value is close to $b$. On the other hand, $Pr(x_{u,i}'=0)$ is large when it is close to $a$. 

Naively encoding all the feature elements with the 1-bit mechanism successfully transmits all the dimensions of local feature to its neighboring clients to some extent. Yet, every featured element suffers a huge utility loss , making the final embeddings deviate far from the optimal. Hence, the feature encoder only encodes part of them while abandoning the rest. Specifically, the feature encoder distributes the $d$ encoded elements into $\text{wl} (u)$ bins randomly. It sends a partial encoded feature only containing the elements in the $k$th bin to the $k$th neighbor respectively for all $k\in \mathbb{Z}_{\text{wl} (u)}$. Distributing encoded elements ensures that all the feature information are sent to one of its neighbors for further aggregation. The partial feature fills the missing elements with 0.5, implying no deviation towards the maximum or minimum value and transmitting no information. Compared with encoding all the elements, the encoded features protect privacy by sending part of data under the same budget. \textcolor{revise}{Moreover, since parts of elements are constant, encoded features have lower variance compared to sending fully encoded features, resulting in a better model.\label{partial} }


 After the devices compute the encoded features $\mathbf{x_u'}\in \{0,0.5, 1\}^d$, they calculate the recovered features $\mathbf{x_u''}$ from 
\begin{equation}
    x_{u,i}''=\left\{
    \begin{aligned}
       & \frac{b-a}{2}\frac{e^{\frac{\epsilon\text{wl}(u)}{d}}+1}{e^{\frac{\epsilon\text{wl}(u)}{d}}-1}+\frac{a+b}{2},&x_{u,i}'&=1\\
        &\frac{a-b}{2}\frac{e^{\frac{\epsilon\text{wl}(u)}{d}}+1}{e^{\frac{\epsilon\text{wl}(u)}{d}}-1}+\frac{a+b}{2},&x_{u,i}'&=0\\
        &\frac{a+b}{2},&x_{u,i}'&=0.5\\
    \end{aligned}
    \right.
    \label{eq:x''}
\end{equation}
It is worth noting that the recovered features are not equal to the original features. They only reveal the tendency of the feature values towards the lower bound $a$ or the upper bound $b$. However, we can prove that such a recovery process do not incur too much utility loss using the following theorem.
\begin{theorem}
The recovered features are unbiased. 
\label{theo:unbiased}
\end{theorem}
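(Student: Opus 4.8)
The plan is to establish unbiasedness coordinate-by-coordinate, i.e. to show that for each transmitted element the conditional expectation of the recovered value $x_{u,i}''$ over the randomness of the one-bit mechanism equals the true feature value $x_{u,i}$. The non-transmitted coordinates (those set to $0.5$ and reconstructed as $\frac{a+b}{2}$ by the third case of Eq.\ref{eq:x''}) carry no information and lie outside the claim, so it suffices to analyze the two random outcomes $x_{u,i}'\in\{0,1\}$. To keep the algebra transparent I would first introduce the shorthand $\alpha = e^{\epsilon\,\text{wl}(u)/d}$ and the normalized feature $q = \frac{x_{u,i}-a}{b-a}\in[0,1]$, rewriting the encoding probabilities as
\[
\Pr(x_{u,i}'=1)=\frac{1+q(\alpha-1)}{\alpha+1},\qquad \Pr(x_{u,i}'=0)=\frac{\alpha-q(\alpha-1)}{\alpha+1}.
\]

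Next I would separate each reconstructed value in Eq.\ref{eq:x''} into a symmetric magnitude $A=\frac{b-a}{2}\cdot\frac{\alpha+1}{\alpha-1}$ and the common offset $\frac{a+b}{2}$, so that the expectation becomes
\[
\mathbb{E}\bigl[x_{u,i}''\bigr]=A\bigl(\Pr(x_{u,i}'=1)-\Pr(x_{u,i}'=0)\bigr)+\frac{a+b}{2}.
\]
The decisive step is the difference of probabilities, which factors cleanly as $\Pr(x_{u,i}'=1)-\Pr(x_{u,i}'=0)=\frac{(\alpha-1)(2q-1)}{\alpha+1}$. Here the reconstruction scale $A$ was chosen precisely so that its factor $\frac{\alpha+1}{\alpha-1}$ cancels against $\frac{\alpha-1}{\alpha+1}$, leaving $\frac{b-a}{2}(2q-1)$, a quantity linear in $q$ and completely independent of $\alpha$. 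Substituting $q=\frac{x_{u,i}-a}{b-a}$ then collapses the remaining terms: $\frac{b-a}{2}(2q-1)=x_{u,i}-\frac{a+b}{2}$, and adding the offset yields $\mathbb{E}[x_{u,i}'']=x_{u,i}$.

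The main obstacle is not analytical difficulty but bookkeeping and interpretation: I would need to verify that the apparently ad hoc constants in Eq.\ref{eq:x''} are exactly the inverse of the debiasing map implicit in the encoding probabilities, and confirm that the dependence on $\alpha$ cancels \emph{completely}, so that unbiasedness holds for every privacy budget $\epsilon$ and every workload $\text{wl}(u)$ rather than only in some limit. I would close by noting that including the $x_{u,i}'=0.5$ case does not alter the statement, since unbiasedness is asserted for the coordinates that are actually encoded and recovered, while the midpoint-filled coordinates are deliberately information-free placeholders.
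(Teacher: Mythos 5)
Your proposal is correct and takes essentially the same route as the paper's proof: both compute $E[x_{u,i}'']$ directly from the Bernoulli law of the encoded bit and use the affine structure of the recovery map so that the privacy-dependent factors $\frac{e^{\epsilon'}+1}{e^{\epsilon'}-1}$ and $\frac{e^{\epsilon'}-1}{e^{\epsilon'}+1}$ (with $\epsilon'=\epsilon\,\mathrm{wl}(u)/d$, your $\alpha$) cancel exactly, leaving $E[x_{u,i}'']=x_{u,i}$. Your bookkeeping via $q$ and the symmetric magnitude $A$ is a notational repackaging of the paper's rewriting $x_{u,i}''=(x_{u,i}'-\frac{1}{2})(b-a)\frac{e^{\epsilon'}+1}{e^{\epsilon'}-1}+\frac{a+b}{2}$, and your explicit note that the midpoint-filled (untransmitted) coordinates lie outside the claim only makes precise what the paper leaves implicit.
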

\begin{proof}
Let $\epsilon'=\frac{\epsilon\text{wl} (u)}{d}$.
We know that $x_{u,i}'$ follows a Bernoulli distribution. In this way, 
\begin{equation}
    E[x_{u,i}']= Pr(x_{u,i}=1)=\frac{1}{e^{\epsilon'}+1}+\frac{x_{u,i}-a}{b-a}\frac{e^{\epsilon'}-1}{e^{\epsilon'}+1}.
\end{equation}
We can rewrite Eq.\ref{eq:x''} into 
\begin{equation}
     x_{u,i}''=(x_{u,i}'-\frac{1}{2})(b-a)\frac{e^{\epsilon'}+1}{e^{\epsilon'}-1}+\frac{a-b}{2}.
\end{equation}
Then, 
\begin{equation}
\begin{aligned}
    E[x_{u,i}''] =& (\frac{1}{e^{\epsilon'}+1}+\frac{x_{u,i}-a}{b-a}\frac{e^{\epsilon'}-1}{e^{\epsilon'}+1}-\frac{1}{2})(b-a)\frac{e^{\epsilon'}+1}{e^{\epsilon'}-1}\\
    &+\frac{a+b}{2}\\
    =&\frac{b-a}{e^{\epsilon'}-1}+x_{u,i}-a-\frac{(b-a)(e^{\epsilon'}+1)}{2(e^{\epsilon'}-1)}+\frac{a+b}{2}\\
    =&x_{u,i}
\end{aligned}
\end{equation}
which completes the proof. 
\end{proof}
The unbiasedness of the recovered features prevents significant utility loss of the feature encoder.  Upon recovering features, each device sends them to neighbouring devices and starts training based on the following message passing design.
\subsection{Message Passing}
Each device considers its tree as a graph and performs message passing along the tree branches. It updates the node embeddings in its device using $l$ graph neural network layers. A tree-based GNN layer is defined as Eq.\ref{eq:agg}. After an $l$-layer update, each device sends the embedding of leaves corresponding to its neighboring vertices to their own devices. 

Upon receiving the updated embeddings of different leaves, the device combines them to generate the final vertex embedding using a pooling function. \textcolor{revise}{\label{pool}The pooling function allows the message to pass across multi-hop neighbors as the embeddings before pooling are generated from different trees, revealing neighboring information of different vertices. }We use an average pooling function in the experiment. 
\begin{equation}
    \mathbf{h_u} = \text{POOL}(\{ \mathbf{h_\mu^l}|\text{leaf }\mu \text{ correspond to vertex }u\})
\end{equation} 
\subsection{Loss Computing}
Device $u$ now obtains its own vertex embedding $\mathbf{h_u}$. Then it needs to calculate the loss value to operate backward propagation. GNN performs well in supervised learning and unsupervised learning scenarios. Our system is compatible in both settings. 
\paragraph{Supervised learning}
Device $u$ is aware of its label $y_u\in \mathbb{Z}_L$. Locally it can use a linear layer to convert the $d$ dimensional embedding $\mathbf{h_u}$ into an $L$ dimensional vector $\mathbf{z_u}$.
\begin{equation}
    \mathbf{z_u} = \text{LINEAR}(\mathbf{h_u})
\end{equation}
It then uses an activation function such as softmax to convert the embedding into a probability vector $\mathbf{p_u}\in [0,1]^L$. The device can now use the classical cross-entropy loss function to compute the loss. 

\paragraph{Unsupervised learning}
Unsupervised learning requires to embed nodes without any label data. However, the model needs to update its weights by evaluating how representative its current embeddings are. To this end, \texttt{Lumos} performs a link prediction task to compute the loss values as a compensation for the non-existence of labeling data since device $u$ is aware of its neighboring vertices $\mathcal{N}(u)$.  Device $u$ requests all the selected vertices $N_u$ to send their embeddings $\{ \mathbf{h_v} | v\in N_u\}$. Then device $u$ negatively samples vertices that are not its neighbors in the global graph and requests their embeddings $\{ \mathbf{h_{v'}}\}\sim P_{\{ \mathbf{h_v}|(u,v)\notin E\}}$. The network prefers proximate embeddings for neighboring vertices and distinct for distanced vertices. In this way, we can compute the loss value.
\begin{equation}
\begin{aligned}
    &\mathcal{L}(\mathbf{h_u}) \\ =&-\sum_{\mathbf{h_v}\in \{ \mathbf{h_v} | v\in N_u\}}\log (\sigma( \mathbf{h_u}\mathbf{h_v}))-\sum_{\mathbf{h_v}\in \{ \mathbf{h_{v'}}\}}\log (-\sigma (\mathbf{h_u}\mathbf{h_v}))
\end{aligned}
\end{equation}

After computing the loss value of each node in each device, the devices share their loss values with each other and aggregate them to get the final loss value. Then all the devices can operate backward propagation to update the weights in the network. The training process continues until the network model converges.
\section{\textcolor{revise}{Privacy Analysis}}
\textcolor{revise}{In this section, we present a privacy analysis to show that \texttt{Lumos} meets the privacy requirement restrictively.\label{pri_proof} }
\textcolor{revise}{First, we analyze the privacy protection of features. In \texttt{Lumos}, the only module to transmit local features is the embedding initiator. Theorem \ref{theo:dp} states that the encoding before feature exchange is a $\epsilon$-local differential privacy mechanism. } 

\begin{theorem}
The embedding initialization protects $\epsilon$-local differential privacy. 
\label{theo:dp}
\end{theorem}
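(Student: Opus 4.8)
The plan is to show that the partial feature vector $\mathbf{x_u'}$ that device $u$ transmits to any single neighbor satisfies $\epsilon$-local differential privacy, by reducing the analysis to a single encoded coordinate and then invoking sequential composition. Write $\epsilon'=\frac{\epsilon\,\text{wl}(u)}{d}$ for the per-coordinate budget used inside the one-bit mechanism. The argument has two ingredients: first, that the one-bit mechanism applied to each individual coordinate is $\epsilon'$-LDP; and second, that since each neighbor receives exactly one of the $\text{wl}(u)$ bins, its view contains only $d/\text{wl}(u)$ genuinely encoded coordinates (the remaining entries being the constant $0.5$), so the budget composes to $\frac{d}{\text{wl}(u)}\cdot\epsilon'=\epsilon$.

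First I would establish the single-coordinate bound. Fix a coordinate $i$ and two admissible values $x_{u,i},x_{u,i}''\in[a,b]$, and write $p(x)=\Pr(x_{u,i}'=1\mid x)$. Because the mapping probability is affine in $x$ with positive slope (as $e^{\epsilon'}-1>0$), $p$ attains its minimum $\frac{1}{e^{\epsilon'}+1}$ at $x=a$ and its maximum $\frac{e^{\epsilon'}}{e^{\epsilon'}+1}$ at $x=b$, with the extremes swapped for the output $0$. Hence for either output $y\in\{0,1\}$ the worst-case likelihood ratio is $\frac{\max_x\Pr(y\mid x)}{\min_x\Pr(y\mid x)}=e^{\epsilon'}$, which is exactly the $\epsilon'$-LDP guarantee for one coordinate.

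Next I would compose across the bin. The entries filled with $0.5$ are set deterministically, independent of the input feature, so they contribute a likelihood ratio of $1$ and may be dropped from the accounting. The remaining $d/\text{wl}(u)$ coordinates are encoded independently, each $\epsilon'$-LDP, so by the sequential composition property of local differential privacy the joint mechanism producing one partial feature is $\bigl(\tfrac{d}{\text{wl}(u)}\cdot\epsilon'\bigr)$-LDP, which equals $\epsilon$-LDP. Since in the semi-honest model each neighbor observes only its own bin, this is precisely the guarantee claimed.

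The main obstacle I anticipate is pinning down the adversary model so the per-bin accounting is justified: the $\epsilon$ budget is the guarantee against any single receiving neighbor, not against a coalition that pools all $\text{wl}(u)$ bins, whose combined view would instead cost $\text{wl}(u)\cdot\epsilon$. A secondary bookkeeping point is the bin size when $d$ is not divisible by $\text{wl}(u)$, where the honest bound is $\lceil d/\text{wl}(u)\rceil\cdot\epsilon'$ rather than the idealized $\epsilon$; stating the result for evenly partitioned bins keeps the constant exactly $\epsilon$.
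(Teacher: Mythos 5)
Your proof follows essentially the same route as the paper's: assign each encoded coordinate the budget $\epsilon'=\frac{\epsilon\,\text{wl}(u)}{d}$ under the one-bit mechanism, then apply sequential composition over the $d/\text{wl}(u)$ genuinely encoded coordinates in the single bin a neighbor receives to get exactly $\epsilon$; the paper merely cites the one-bit mechanism's LDP guarantee where you derive the likelihood ratio $e^{\epsilon'}$ directly, and your caveats about constant $0.5$ entries and per-neighbor (non-colluding) accounting are implicit in its argument. The only step the paper makes explicit that you omit is that what is actually transmitted is the recovered feature $\mathbf{x_u''}$, a deterministic bijection of the encoded bits, which leaves the $\epsilon$-LDP guarantee unchanged.
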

\begin{proof}
We deploy a 1-bit encoder and a bijective recovery process to every selected feature bit with the noise parameter $\frac{\epsilon \text{wl}(u)}{d}$. The 1-bit encoder is proved to protect $\epsilon-$ local differential privacy\cite{ding2017collecting}. The composability property of LDP \cite{mcsherry2009privacy} ensures that a series of randomized mechanisms with $\epsilon_i$-LDP ensures $\sum_i \epsilon_i$-DP in total. Therefore, the feature encoder satisfies $\frac{d}{\text{wl}(u)}\cdot \frac{\epsilon\text{wl}(u)}{d}=\epsilon$-DP. The recovery function is bijective, so it doesn't impact the $\epsilon-$LDP protection of the encoder. Hence, the embedding initialization protects $\epsilon-$local differential privacy. 
\end{proof}

\textcolor{revise}{Next, we analyze the privacy protection of node degrees. In \texttt{Lumos}, degree privacy protection follows zero-knowledge protocol as required. }
\begin{theorem}
\textcolor{revise}{Lumos protects node degree  under a zero-knowledge protocol. }
\label{theo:zkp}
\end{theorem}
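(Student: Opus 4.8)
The plan is to reduce the zero-knowledge guarantee for the whole system to the security of the single cryptographic primitive that \texttt{Lumos} employs for every degree-dependent operation. First I would enumerate the only places where degree information (or its derived workload) crosses device boundaries: Line 4 of the greedy initialization (Alg.\ref{alg:gi}), where $\text{round}(\ln \deg(v))$ is compared against $\text{round}(\ln \deg(u))$, and the two device operations of the max-finding routine (Alg.\ref{Alg:argmax}) invoked by the MCMC iteration (Alg.\ref{alg:mcmc}), where a device compares its workload $w_v$ with those of its neighbors and of the candidate set. In each of these, no raw degree or workload is ever transmitted in the clear; instead the comparison is realized through the CrypTFlow2 two-party secure integer-comparison protocol\cite{rathee2020cryptflow2}.

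Second, I would invoke the security of that primitive. CrypTFlow2's comparison is a secure two-party computation in the semi-honest model whose ideal functionality outputs only the single bit $\mathbf{1}[a<b]$. By its simulation-based security, the transcript seen by either participant can be generated by a simulator given only that party's own input together with the output bit. Consequently, a single comparison leaks to a device nothing beyond the ordering of the two degrees (or workloads), which is exactly the information permitted by Definition \ref{def:zkp}: the recipient learns \emph{larger} or \emph{smaller} but can extract no additional value of the other device's degree. The logarithm and rounding in Alg.\ref{alg:gi} occur \emph{inside} the secure comparison, so they introduce no extra leakage.

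Third, I would lift this from a single comparison to the full protocol by a sequential-composition argument. Treating each CrypTFlow2 call as an ideal comparison oracle, the entire tree-trimming procedure---greedy initialization followed by $T$ rounds of MCMC, each calling Alg.\ref{Alg:argmax}---is simply a sequence of such oracle calls interleaved with purely local computation. The composition theorem for semi-honest secure computation then guarantees that each device's overall view is simulatable from its own inputs and the list of comparison bits in which it participated, so no individual comparison degrades the zero-knowledge property. I would separately check the server's role in Alg.\ref{Alg:argmax}: the server receives only the candidate flags (which are themselves outputs of secure comparisons) and the index of the maximum-workload device, never any numeric workload, so its view too is simulatable from comparison outcomes alone.

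The main obstacle I anticipate is the composition step rather than the single-comparison step: one must argue that the accumulation of many comparison bits across iterations, together with the server's aggregation in Alg.\ref{Alg:argmax}, does not \emph{jointly} reveal more than the per-comparison definition allows. I would handle this by appealing to the modular composition theorem for secure multiparty computation, so that the joint view reduces exactly to the sequence of comparison results---which Definition \ref{def:zkp} explicitly sanctions---rather than attempting to bound leakage comparison-by-comparison from scratch.
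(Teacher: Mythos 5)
Your proposal is correct, and it is in fact more thorough than the paper's own argument, though both rest on the same foundation: reducing degree privacy to the security of the CrypTFlow2 two-party comparison primitive. The paper's proof is a three-sentence citation chain: \texttt{Lumos} uses CrypTFlow2 for all degree/workload comparisons; CrypTFlow2 is a hybrid of oblivious-transfer steps \cite{brassard1986all}; and oblivious transfer guarantees zero-knowledge by Kilian's result \cite{kilian1988founding}. It never enumerates where degree information crosses device boundaries, never discusses what happens when many comparison bits accumulate across the greedy initialization and the $T$ MCMC iterations, and never examines the server's view in Alg.\ref{Alg:argmax}. Your proof fills exactly these gaps: you identify the complete set of leakage points (Alg.\ref{alg:gi} Line 4 and the two device operations of Alg.\ref{Alg:argmax}), you invoke simulation-based security of the primitive rather than the OT-completeness shortcut, and---most importantly---you handle the composition question explicitly via the modular composition theorem for semi-honest secure computation, arguing that the joint view of any device (and of the server) is simulatable from the sequence of comparison bits, which is precisely the leakage Definition \ref{def:zkp} sanctions. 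What the paper's route buys is brevity and a direct appeal to a classical completeness result; what your route buys is an actual end-to-end argument that the \emph{whole protocol}, not just a single invocation of the primitive, satisfies the definition---a claim the paper asserts but does not really prove. The only point to tidy up is your remark that the logarithm and rounding occur ``inside'' the secure comparison: in Alg.\ref{alg:gi} each device computes $\mathrm{round}(\ln \deg(\cdot))$ locally as preprocessing and the secure comparison runs on the transformed values; this changes nothing in the conclusion (the output is still a single comparison bit, on a coarser quantity), but the phrasing should reflect that it is local preprocessing rather than part of the cryptographic protocol.
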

\begin{proof}
\textcolor{revise}{\texttt{Lumos} deploys a 2-party encryption protocol in CrypTFlow2\cite{rathee2020cryptflow2} to protect the original node degree and make integer comparison computations. The encryption protocol is a hybrid algorithm of multiple oblivious transfer steps\cite{brassard1986all}. In \cite{kilian1988founding}, it is proved that oblivious transfer guarantees a zero-knowledge proof. }
\end{proof}

\section{Evaluation\label{sec:ev}}
In this section, we present the experimental results and analysis of \texttt{Lumos}. We first examine how \texttt{Lumos} performs compared to baseline systems in supervised and unsupervised scenarios. Then we perform sensitivity analysis of privacy parameter $\epsilon$. Last but not least, we conduct ablation study by replacing two key components in \texttt{Lumos} to show their irreplaceability in terms of accuracy and system cost.

\subsection{Dataset}

We use two social network graphs with different sizes: Facebook\cite{rozemberczki2021multi} and LastFM\cite{rozemberczki2020characteristic}. We split the graphs into $|V|$ ego networks so that each device represented by one vertex in the graph holds its corresponding ego network. The ego networks only include the feature and label for the central vertex without any node information related to other vertices. 
\begin{itemize}
    \item Facebook page-page graph describes the interaction among different Facebook pages as a representation of online social relationships. Its 22,470 vertices are all Facebook pages, and 170,912 edges represent a mutual ``like". 4,714 vertex features are obtained from the page descriptions. Its label is the category of the pages. There are 4 categories. 

\item LastFM graph is crawled from a music streaming website, LastFM. Its 7,624 vertices are users, and the 55,612 edges represent ``following" relationship between them. Its 128 vertex features are users' preferred musicians. Its label is the nationality of the user. There are 18 classes. 
\end{itemize}
\subsection{Experimental Settings}
 In the tree constructor, we run \textcolor{revise}{1,000 MCMC iterations for Facebook dataset and 300 for LastFM dataset} to balance workloads and construct trees. We utilize GCN \cite{welling2016semi} and GAT\cite{velivckovic2018graph} as two backbone GNN layers to test the compatibility of our system. All the GNN models have $l=2$ layers. The GNN layers are followed by an activation ReLU function and a dropout function with a probability 0.01. The GAT layer has four attention heads. The output dimension and the hidden dimension are 16. All the models are trained using the Adam optimizer. The learning rate $lr$ is set to be $0.01$. The privacy parameter $\epsilon$ is set to be $2$. The GNN trainer trains the model for 300 epochs. The experiments are done with an Intel Core i7-10700 CPU clocked at 2.90GHz, an 8GB-memory NVIDIA RTX 3070, and 16GB memory. 

We first evaluate the performance of our system in supervised settings to complete a label classification task. We uniformly sample the vertices into training, validation, and test sets for all the datasets with 50\%, 25\%, and 25\% ratios. The label with the maximum probability is chosen and compared with the ground truth to compute the classification accuracy.

Then we conduct experiments in unsupervised settings to complete a link prediction task. We uniformly sample the edges into training, validation, and test sets for all the datasets with 80\%, 5\%, and 15\% ratios. Each link is predicted with a given probability and compared with the ground truth to compute the ROC-AUC score \cite{fawcett2006introduction}, which is the probability that a classifier will assign a randomly chosen positive instance with a higher probability of being positive than a randomly chosen negative instance. We prefer models with higher scores.
\subsection{Comparison Methods}
 We compare our methods with centralized GNN network models, a local differential private GNN system (LPGNN\cite{sajadmanesh2021locally}), and a naive federated GNN system. 

 \begin{itemize}
\item Centralized GNN network models take a complete global graph as input to a GNN model. Both the edge information and the node features are accessible to the server. 
 
 \item LPGNN protects node features and labels with locally $\epsilon_x$ and $\epsilon_y$  differential privacy accordingly. \textit{However, it assumes that the server owns the graph structure.} Although it is proposed in an environment where privacy is less restricted than ours, we can still compare the model's prediction performance with \texttt{Lumos}. In the experiments, we set $\epsilon_x=2$ and $\epsilon_y=1$. Since LPGNN is proposed in supervised settings, we do not measure its performance in unsupervised scenarios.
 
 \item Naive FedGNN system noises graph statistics to ensure local differential privacy. It uses Gaussian Mechanism\cite{dwork2014algorithmic} to noise features and Randomized Response\cite{warner1965randomized} to noise adjacency matrix and labels. The devices send the noised ego networks to the server to train GNN upon the noised graph. 
 
 
 \end{itemize}
\subsection{Performance Results}
\subsubsection{Supervised Learning}

 The results are shown in Fig.\ref{fig:perfosu}. \texttt{Lumos} merely loses a 16.29\% accuracy upon Facebook dataset and 14.73\% upon LastFM dataset compared to centralized GCN. On the other hand, \texttt{Lumos} merely loses a 16.30\% accuracy upon Facebook dataset and 15.76\% upon LastFM dataset compared to centralized GAT. This shows that \texttt{Lumos} performs close to centralized GNN.  On Facebook dataset, \texttt{Lumos} outperforms LPGNN with a 6.44\% accuracy increase for GCN and a 5.10\% increase for GAT. On LastFM dataset, \texttt{Lumos} outperforms LPGNN with a 12.26\% accuracy increase for GCN and an 7.62\% increase for GAT. The result shows that \texttt{Lumos} embeds vertices more representative than LPGNN does with more privacy restrictions, particularly for datasets with excessive classes since LPGNN noises label to upload them to the server while \texttt{Lumos} processes original labels locally. \texttt{Lumos} performs much better than naive FedGNN does. On Facebook dataset, \texttt{Lumos} outperforms naive FedGNN with a relatively 74.29\% accuracy increase for the GCN model and 42.13\% for the GAT model. On LastFM, \texttt{Lumos} outperforms naive FedGNN with a relatively 52.85\% accuracy increase for GCN model and 32.62\% for the GAT model. This imply that our \texttt{Lumos} resolves the issue caused by federation of graph. 

\begin{figure}[!h]
\centering
\subfloat[Facebook]{\includegraphics[width=1.6in]{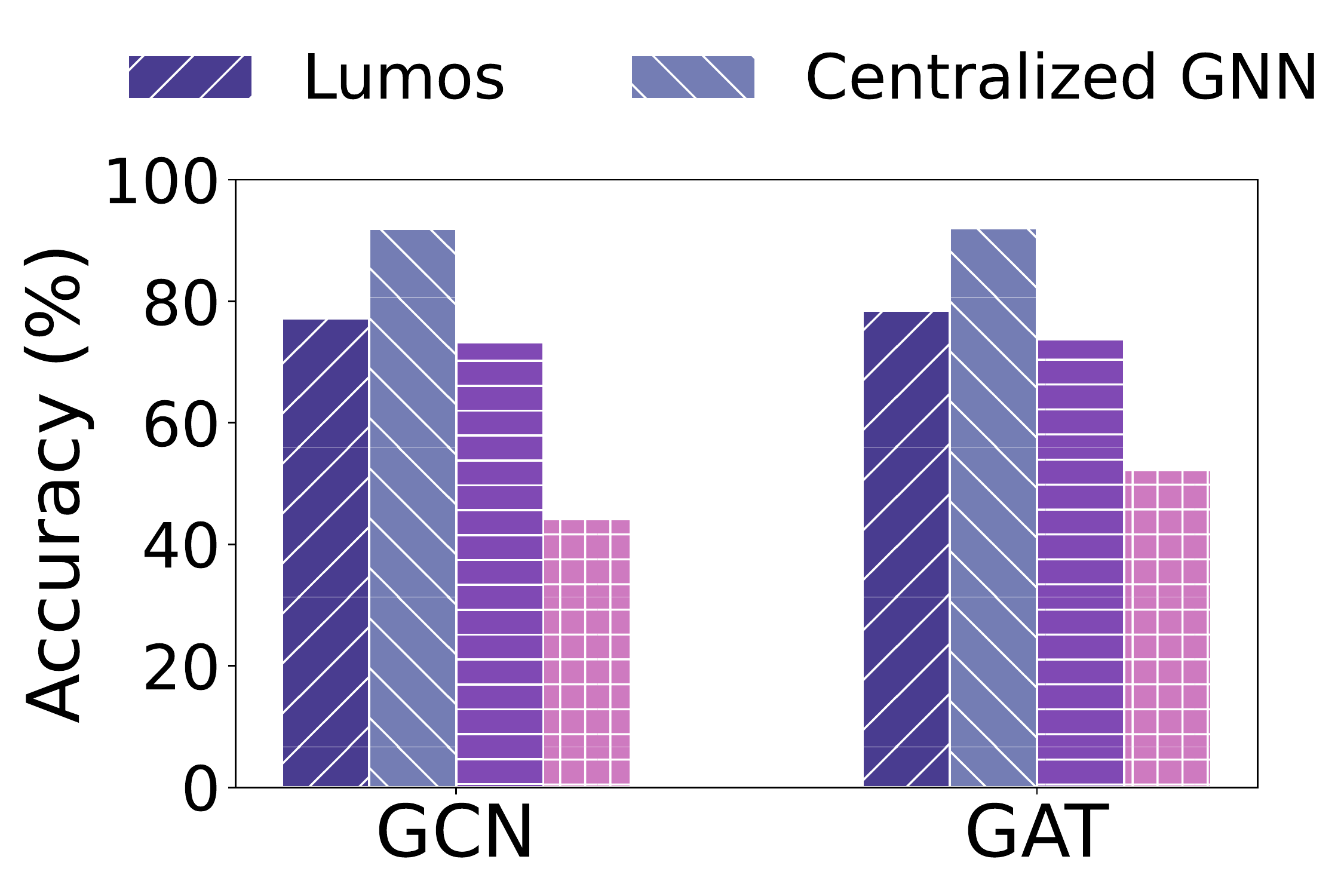}
\label{fig:fbsu}}
\hfil
\hspace{-0.5cm}
\subfloat[LastFM]{\includegraphics[width=1.6in]{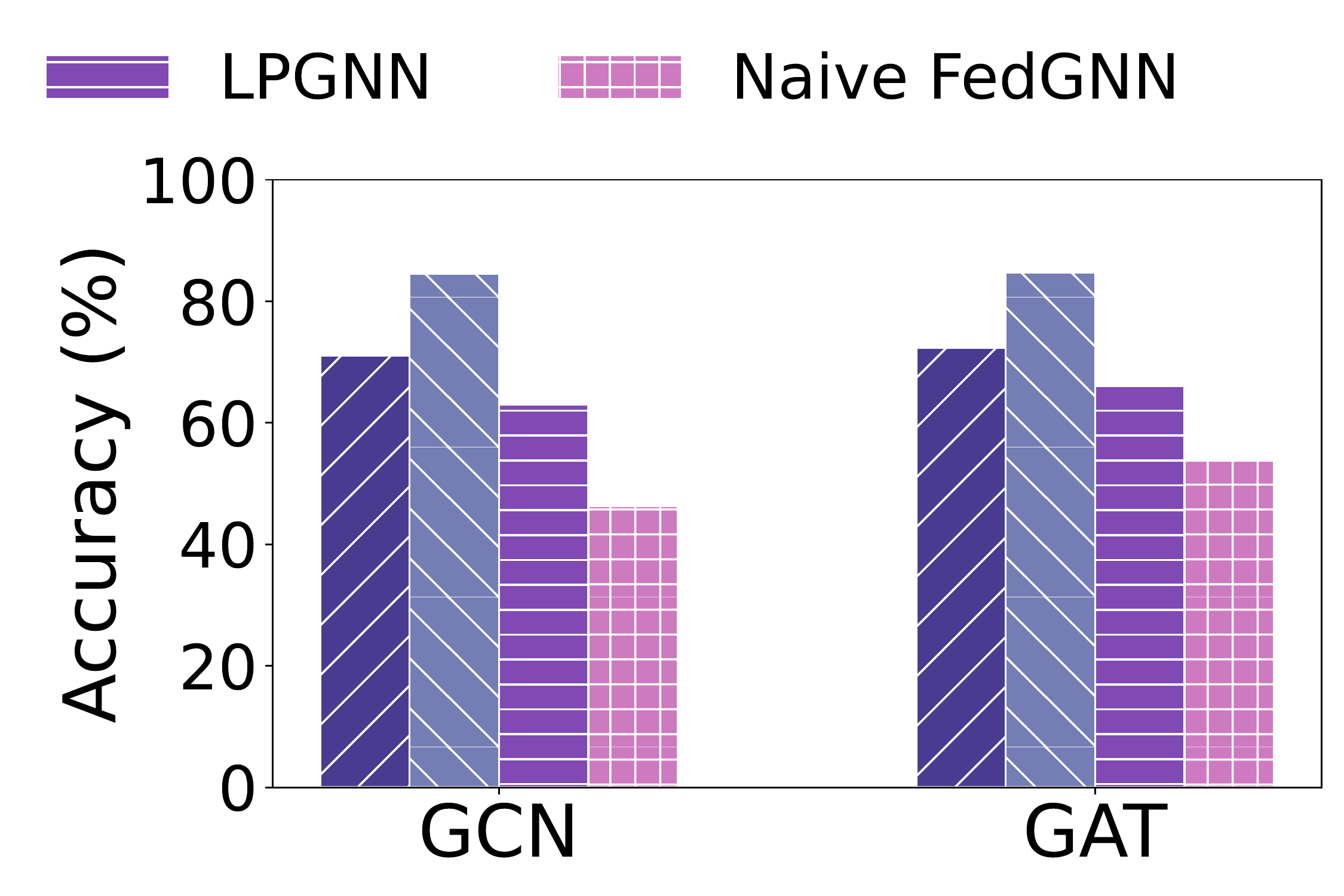}
\label{fig:fmsu}}
\caption{Label classification accuracy}
\label{fig:perfosu}
\end{figure}
\subsubsection{Unsupervised Learning}
Fig.\ref{fig:perfoun} presents the results. \texttt{Lumos} merely loses a 7.13\% AUC score upon Facebook dataset and a 3.60\% AUC score upon LastFM compared to centralized GCN. On the other hand, \texttt{Lumos} merely loses a 9.12\% AUC score upon Facebook dataset and a 4.59\% AUC score upon LastFM compared to GAT. \texttt{Lumos} performs much better than naive FedGNN does. On Facebook dataset, \texttt{Lumos} outperforms naive FedGNN with a relatively 22.94\% AUC score increase for the GCN model and 22.03\% for the GAT model. On LastFM, \texttt{Lumos} outperforms naive FedGNN with a relatively 19.72\% accuracy increase for GCN model and 19.60\% for the GAT model. This imply that our \texttt{Lumos} resolves the issue caused by federation of graph. 
\begin{figure}[!h]
\centering
\subfloat[Facebook]{\includegraphics[width=1.6in]{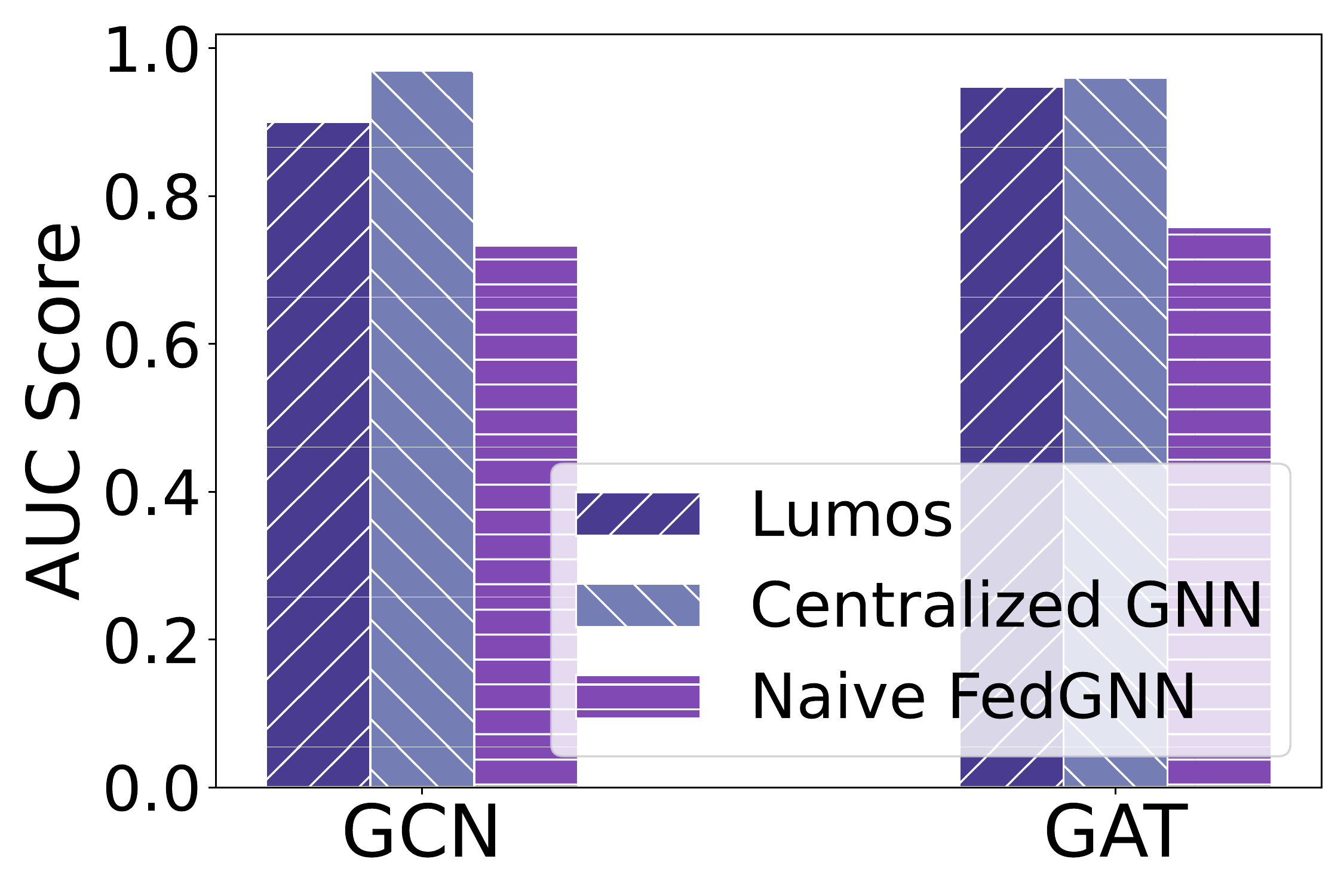}
\label{fig:fbun}}
\hfill
\hspace{-0.5cm}
\subfloat[LastFM]{\includegraphics[width=1.6in]{unsupervised_facebook.pdf}
\label{fig:fmun}}
\caption{Link prediction ROC-AUC score}
\label{fig:perfoun}
\end{figure}



\subsection{Sensitivity Analysis of Privacy Parameter $\epsilon$}
Figure \ref{fig:sense} illustrates the effect of privacy parameter $\epsilon$ upon the model performance of \texttt{Lumos}. For supervised learning, raising $\epsilon$ from 0.5 to 1, 2, and 4 results in relatively 5.13\%, 9.23\%, and 10.26\% accuracy increases on Facebook dataset and relatively 9.24\%, 14.85\% and 16.82\% accuracy increases on LastFM dataset. For unsupervised learning, raising $\epsilon$ from 0.5 to 1, 2, and 4 results in relatively 8.88\%, 15.04\%, and 17.14\% AUC score increases on Facebook dataset and 9.59\%, 16.37\% and 18.93\% AUC score increase on LastFM dataset. Both plots show that \texttt{Lumos} with larger $\epsilon$ performs better as the feature encoder with small $\epsilon$ protects more privacy by deviating the output features more from the original. Moreover, \texttt{Lumos} is robust to variation in large $\epsilon$ values.

\begin{figure}[!h]
\centering
\subfloat[Supervised]{\includegraphics[width=1.6in]{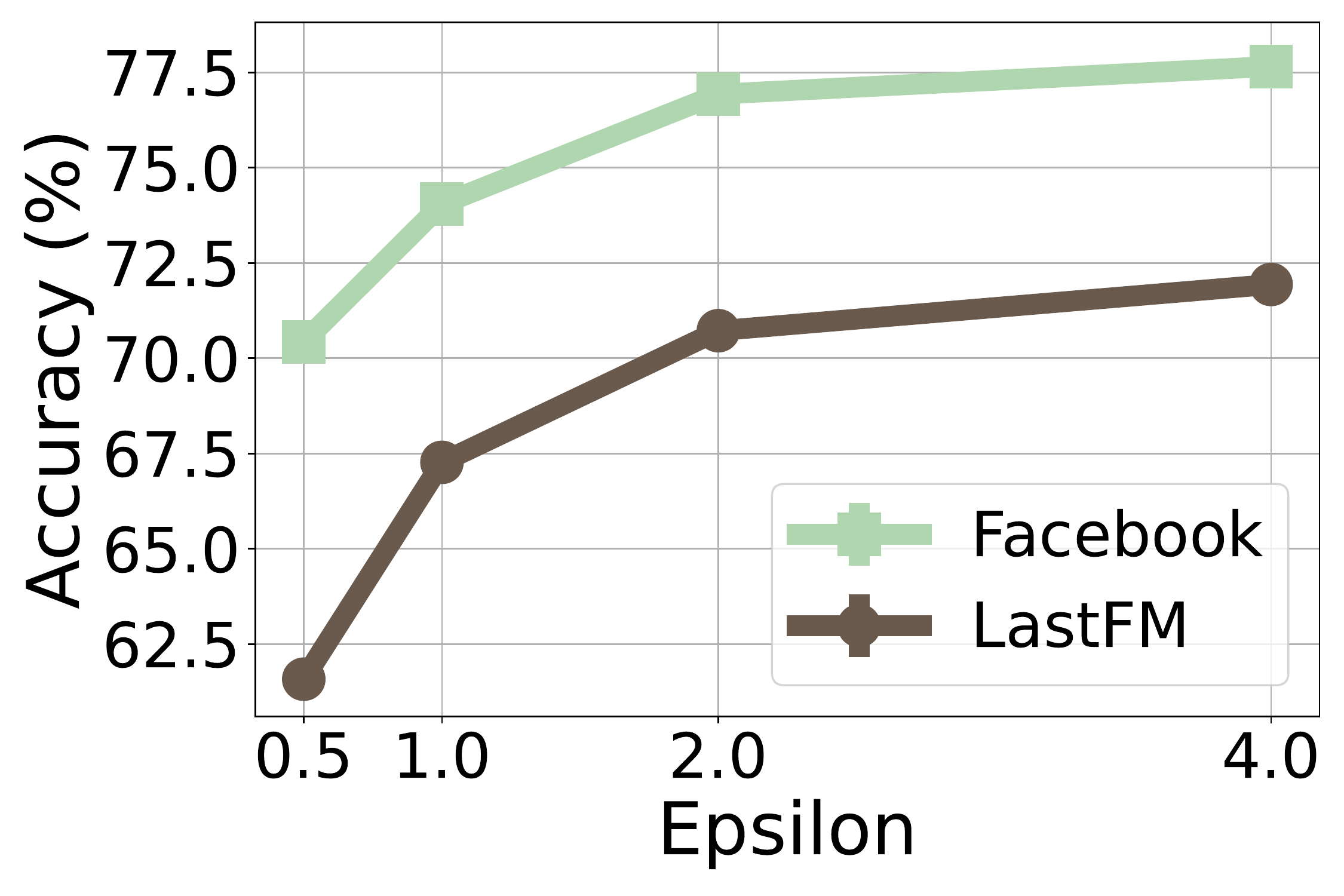}
\label{fig:sense_super}}
\hfil
\hspace{-0.5cm}
\subfloat[Unsupervised]{\includegraphics[width=1.6in]{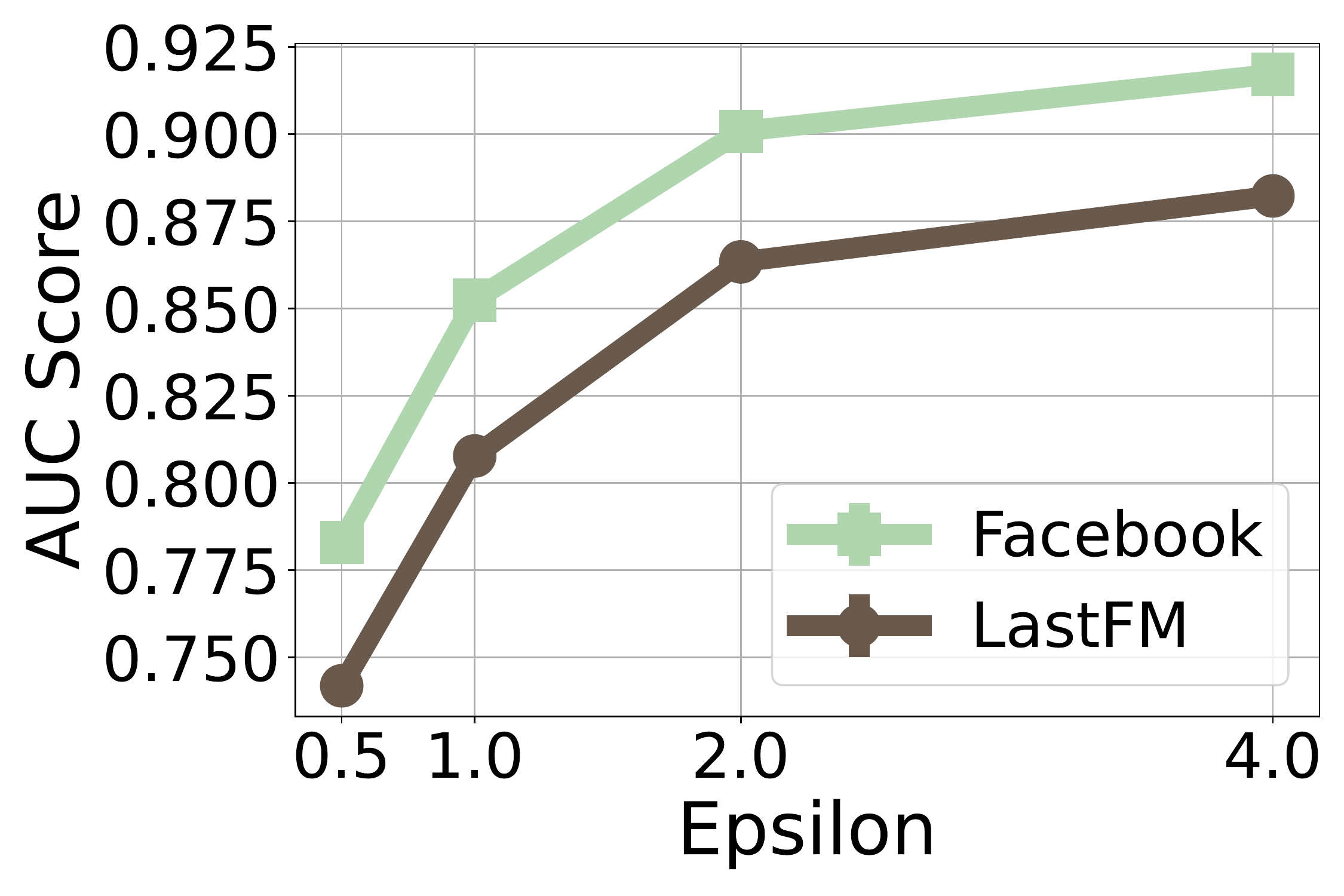}
\label{fig:sense_un}}
\caption{Effect of privacy parameter $\epsilon$}
\label{fig:sense}
\end{figure}

\subsection{Ablation Study}
We perform the ablation study by replacing two essential components in \texttt{Lumos} with naive implementation to demonstrate their effectiveness. \texttt{Lumos} without virtual nodes, referred to as \texttt{Lumos} w.o. VN, takes the original ego network as the input to the tree-based GNN trainer. \texttt{Lumos} without tree trimming, referred to as \texttt{Lumos} w.o. TT, uses all the vertices in the ego network to construct a tree by skipping the tree trimming process. 
\subsubsection{Effect of two modules on accuracy}
Fig.\ref{fig:ab} presents the accuracy performance comparsion of \texttt{Lumos} with these two approaches.
\texttt{Lumos} without virtual nodes performs the worst. In supervised learning, \texttt{Lumos} outperforms the one without virtual nodes with a 13.73\% accuracy increase for the GCN model, 13.10\% for the GAT model on Facebook dataset, and a 16.36\% accuracy increase for the GCN model and 15.70\% for GAT model on LastFM.  In unsupervised learning, \texttt{Lumos} outperforms the one without virtual nodes with a 8.26\% AUC score increase for the GCN model, 8.36\% for the GAT model on Facebook, and a 8.99\% AUC score increase for the GCN model and 7.69\% for the GAT model on LastFM. The figures imply that \texttt{Lumos} successfully improves the representation by adding virtual nodes and enlarging the local graphs. \texttt{Lumos} performs very close to \texttt{Lumos} without tree trimming with relative accuracy and AUC score differences less than 0.01\%. The slight  differences indicate that \texttt{Lumos} is still expressive despite tree trimming since it ensures that every neighboring pair exists in at least one tree. 

\begin{figure}[!h]
\centering
\subfloat[Supervised]{\includegraphics[width=1.6in]{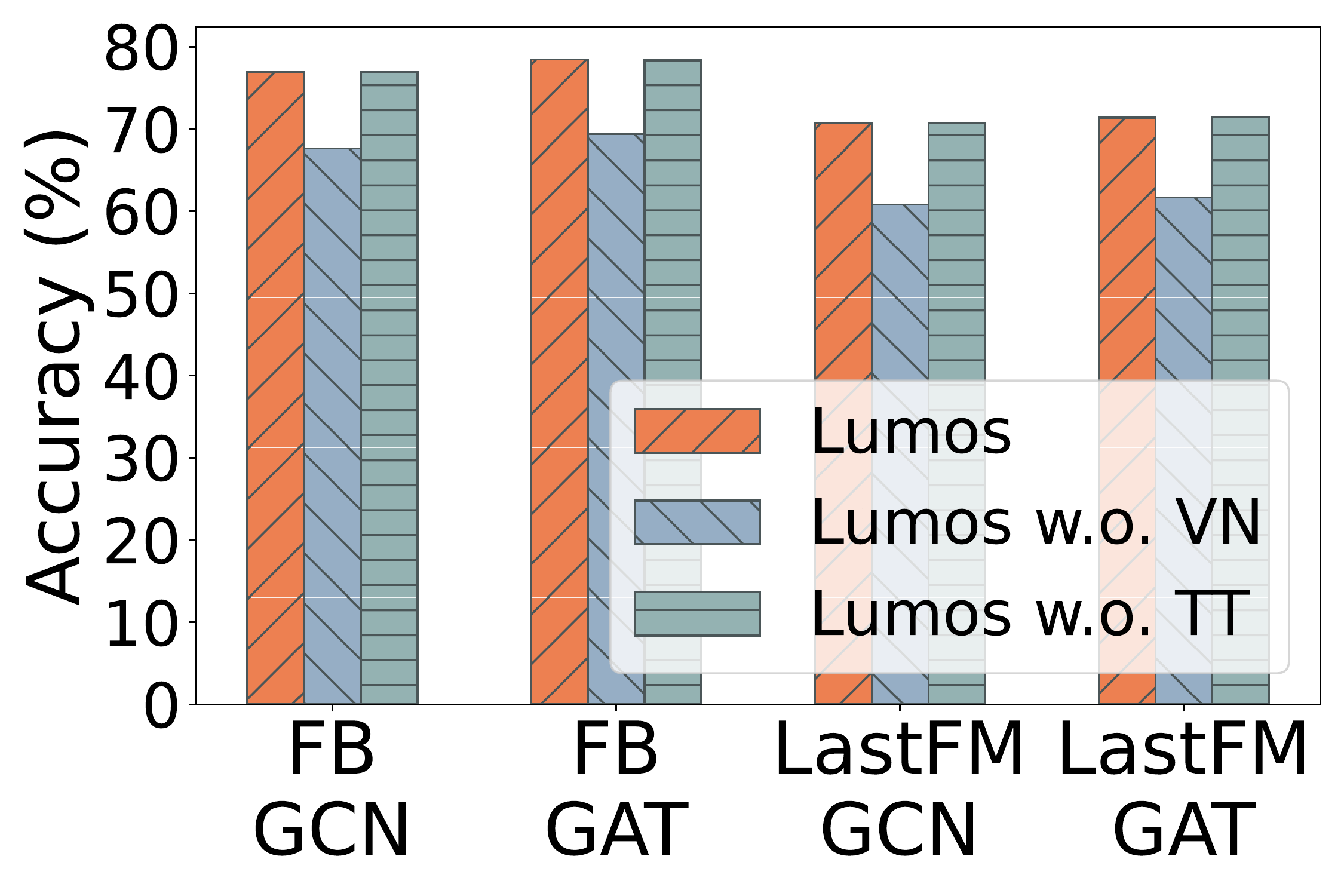}
\label{fig:ab_super}}
\hfil\hspace{-0.5cm}
\subfloat[Unsupervised]{\includegraphics[width=1.6in]{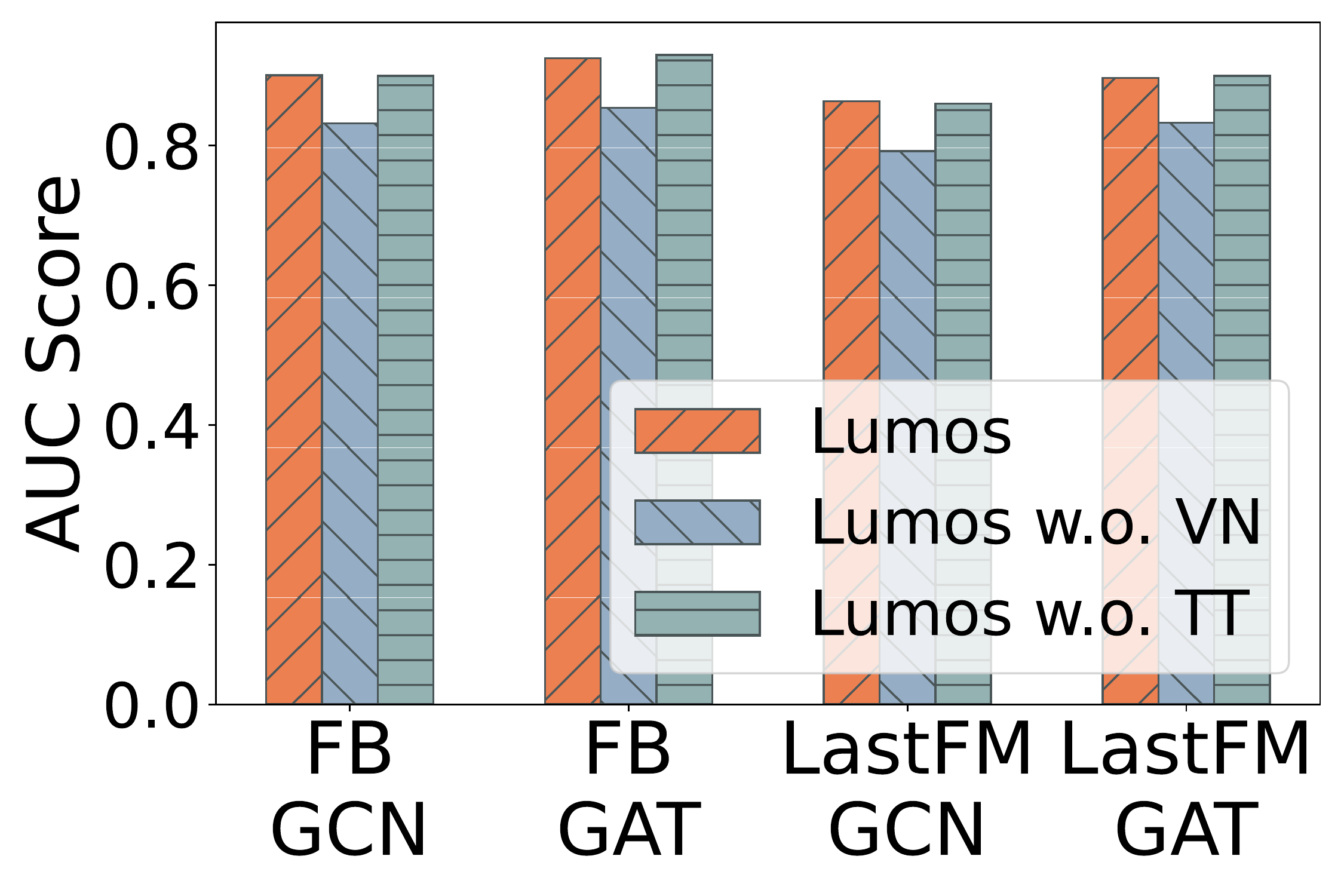}
\label{fig:ab_un}}
\caption{Accuracy contribution of each module in Lumos}
\label{fig:ab}
\end{figure}

\subsubsection{Effectiveness in balancing the workload}
Tree constructor influences not only embedding representativeness but also systematic performances by trimming trees to balance workloads across devices. Therefore, we specifically reveal its effect on system performance.
We compare the workload distribution with and without tree trimming in Facebook and LastFM datasets. We measure workload using the number of selected neighboring vertices in the device. Without trimming, the workload is the original degree value in the graph. Fig.\ref{fig:fl} illustrates that the CDF of workloads of \texttt{Lumos} no longer has a heavy tail which \texttt{Lumos} without tree trimming has. In Facebook dataset, the maximal workload with tree trimming is 39, while the maximum without trimming is more significant than 150. In LastFM dataset, the maximal workload with trimming is 16, while the maximum without trimming is more significant than 100. The CDF plots show that \texttt{Lumos} successfully reduces those devices with large degree values and addresses workload imbalance. 

\begin{figure}[!h]
\centering
\subfloat[Facebook]{\includegraphics[width=1.6in]{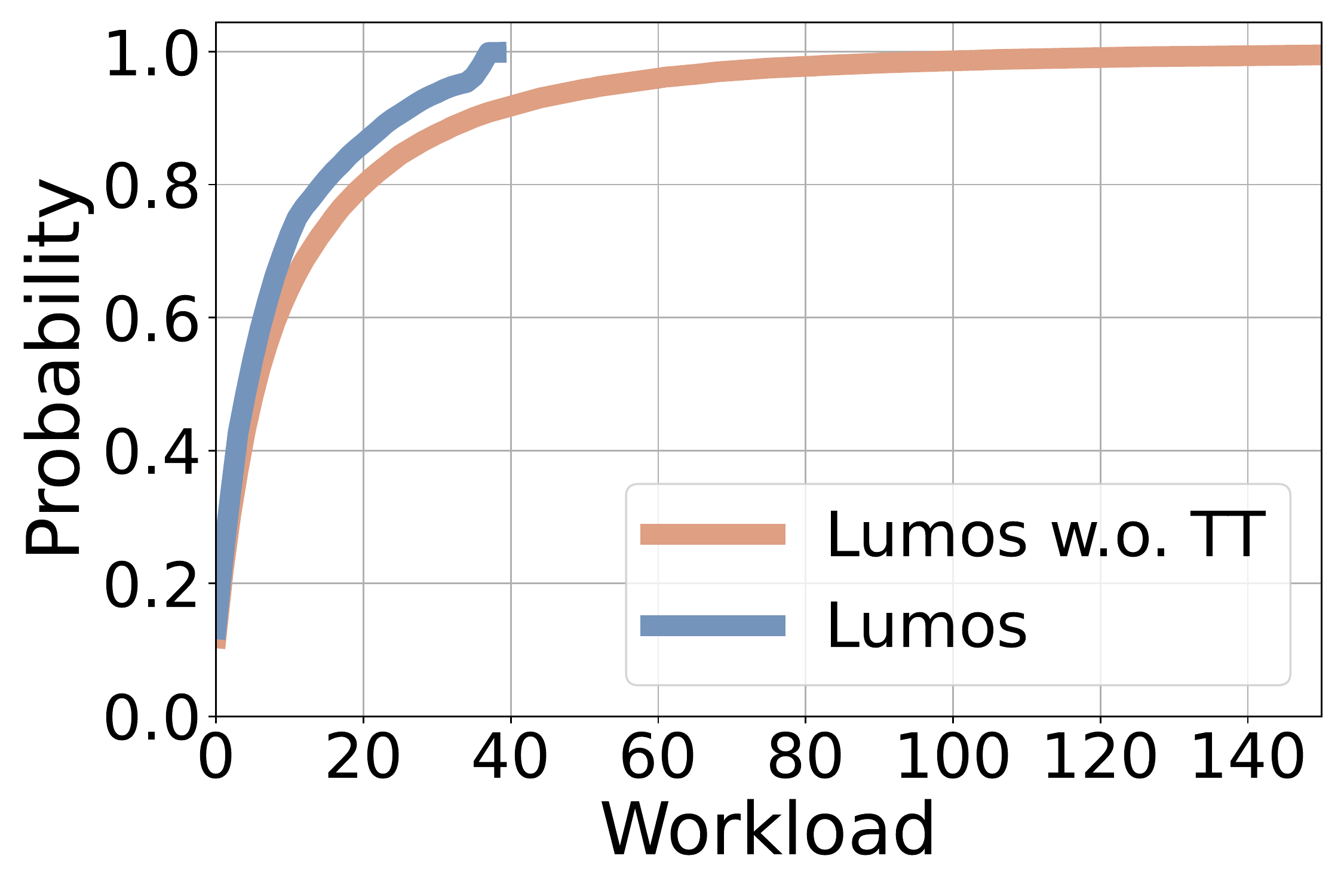}
\label{fig:fb_wl}}
\hfil\hspace{-0.5cm}
\subfloat[LastFM]{\includegraphics[width=1.6in]{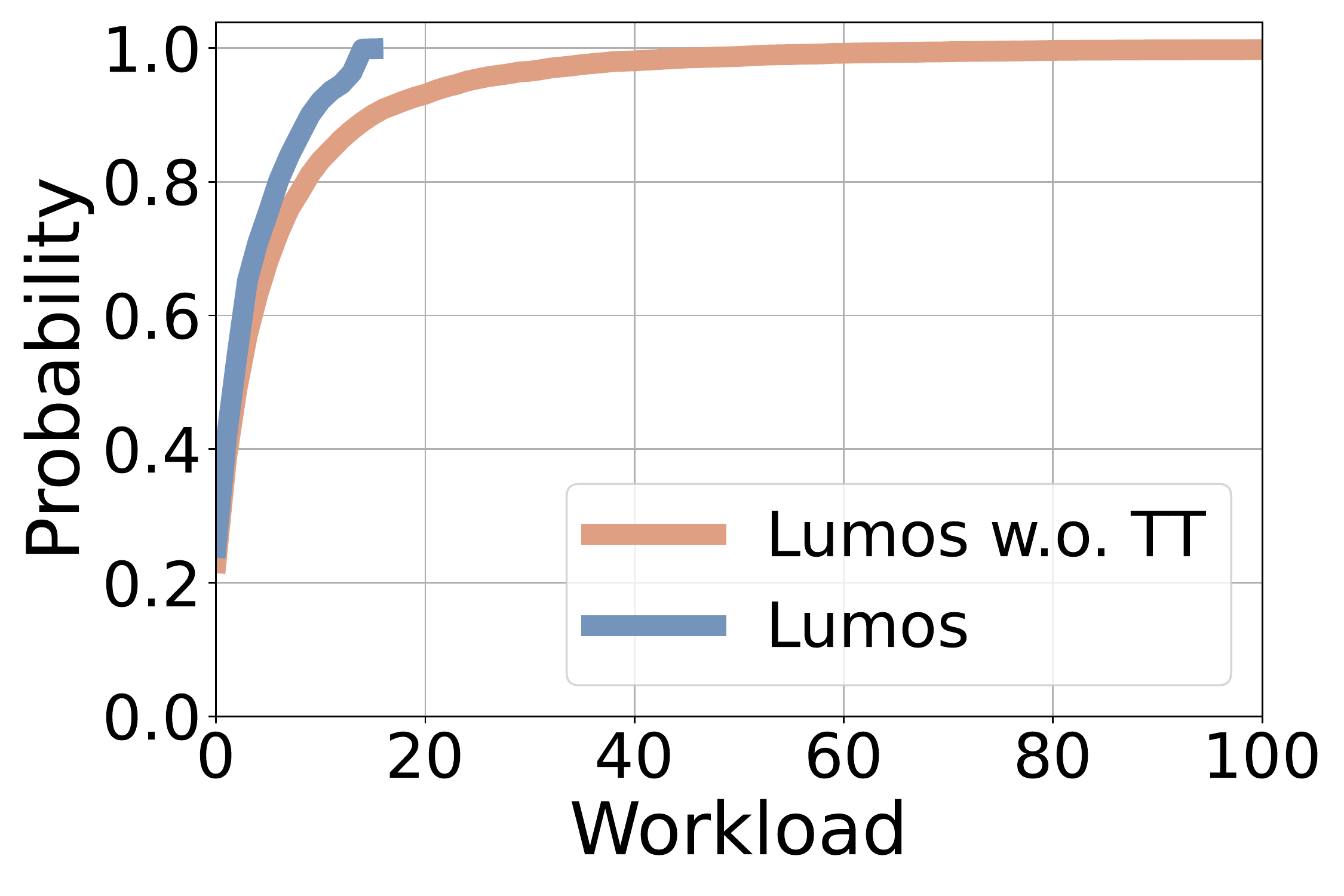}
\label{fig:last_fl}}
\caption{CDF of workload with and without tree trimming(TT)}
\label{fig:fl}
\end{figure}
\subsubsection{Effectiveness in reducing system cost}
 We measure the effect of the tree constructor by quantifying how well the tree constructor improves the system performance. Fig.\ref{fig:perftree} illustrates how tree trimming affects the overall systematic performance of \texttt{Lumos}. Fig.\ref{fig:comm} presents the average number of inter-device communication rounds in one epoch in a tree-based GNN trainer. On Facebook dataset, \texttt{Lumos} with trimming saves 34.19\% communication rounds per local device per epoch for supervised learning while it saves 27.34\% rounds for unsupervised learning. On LastFM dataset, \texttt{Lumos} with trimming saves 43.02\% rounds for supervised learning while it saves 36.80\% rounds for unsupervised learning. Fig.\ref{fig:time} presents the average training time per epoch in tree-based GNN trainer. On Facebook dataset, \texttt{Lumos} with trimming saves 13.32\% training time per epoch for supervised learning while it saves 10.34\% time for unsupervised learning. On LastFM dataset, \texttt{Lumos} with trimming saves 36.38\% training time for supervised learning while it saves 10.90\% for unsupervised learning. Fig.\ref{fig:perftree} shows that our tree constructor with neighbor selection improves the system in terms of communication cost and training time by reducing the workloads in those devices with large degree values. 

\begin{figure}[!h]
\centering
\subfloat[Communication Rounds]{\includegraphics[width=1.6in]{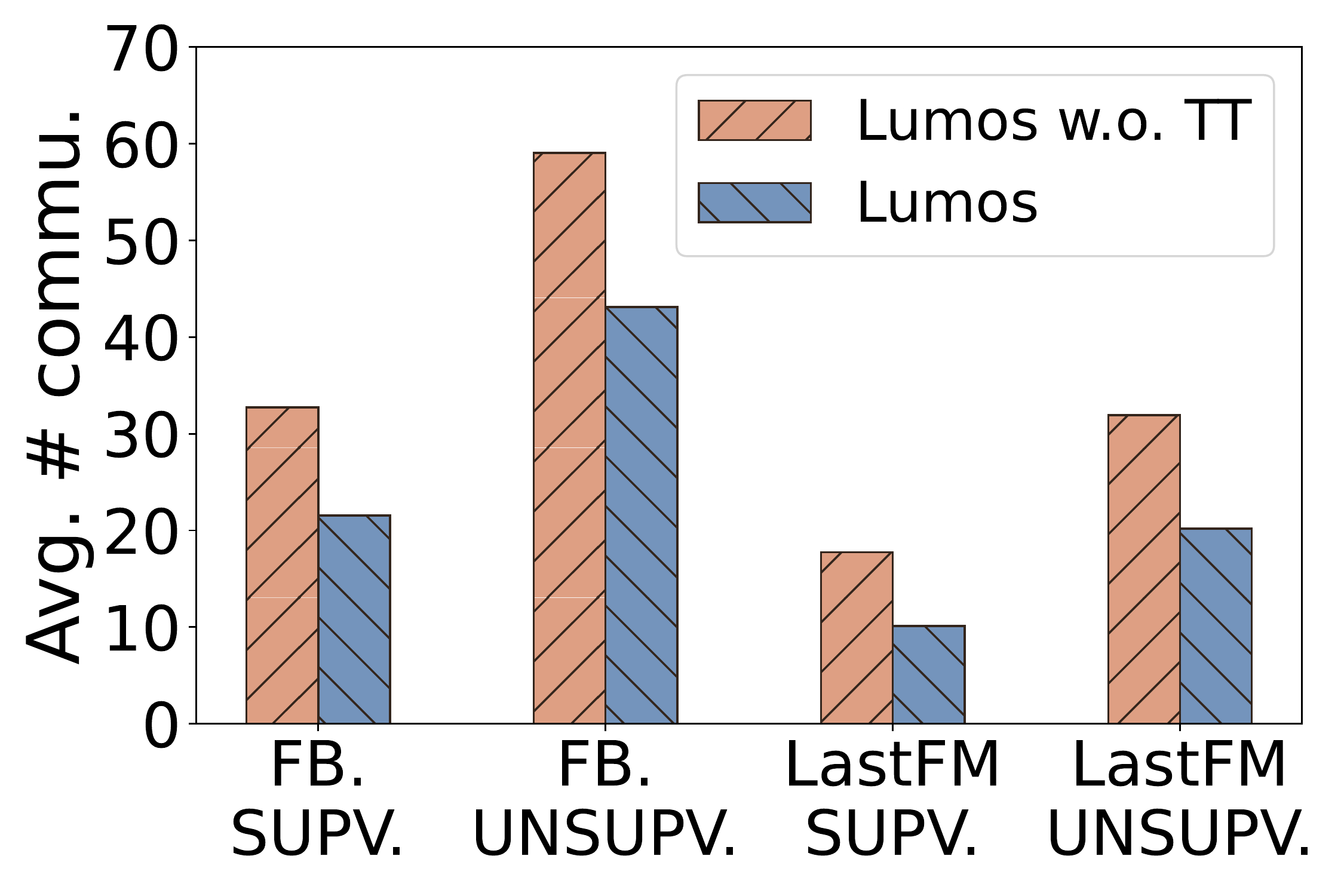}
\label{fig:comm}}
\hfil\hspace{-0.5cm}
\subfloat[Training Time]{\includegraphics[width=1.6in]{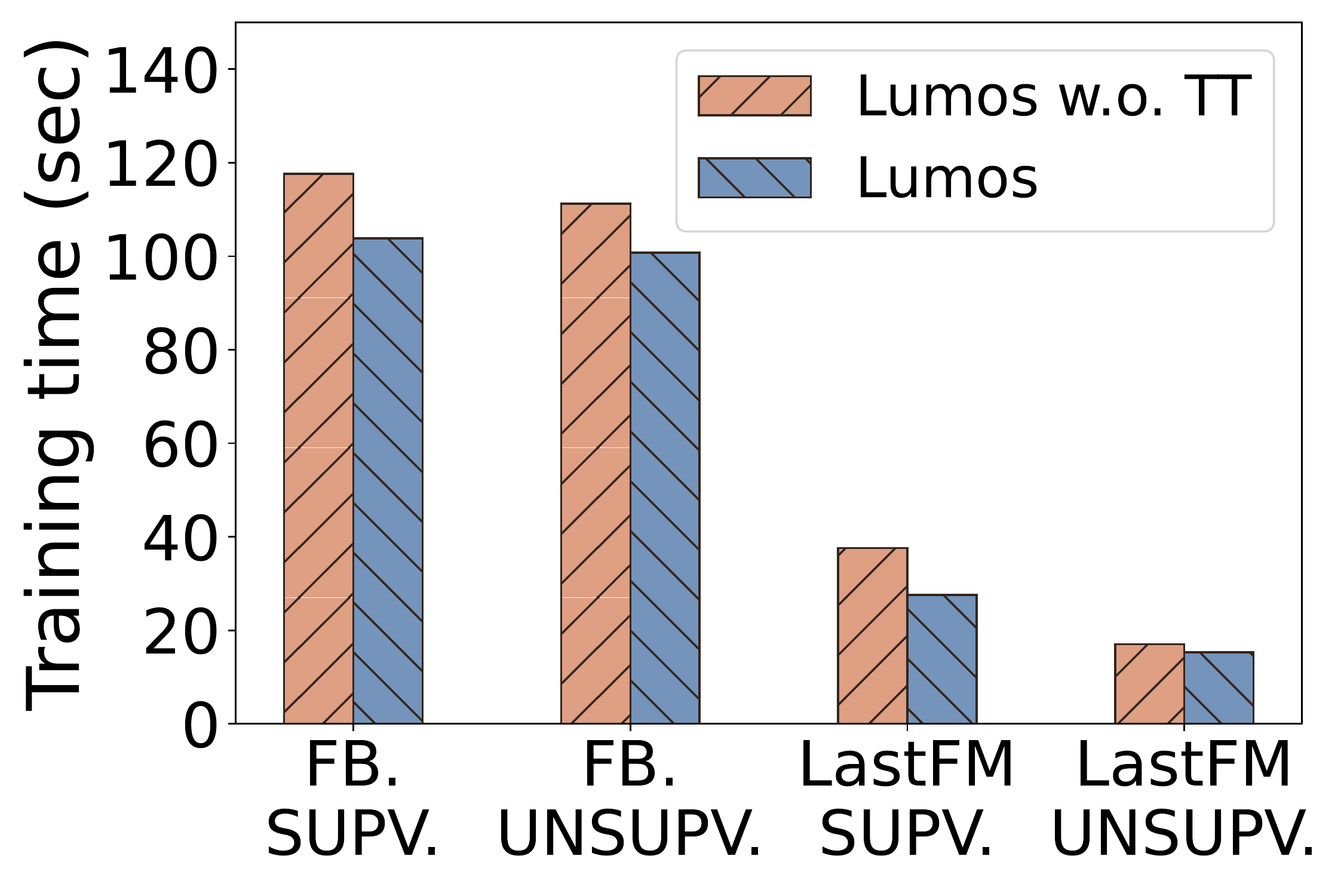}
\label{fig:time}}
\caption{System performance contribution of tree trimming}
\label{fig:perftree}
\end{figure}

 

\section{Conclusion\label{sec:con}}
Federated GNN is an important learning problem that studies efficient graph data processing in privacy preserving environment. Existing studies on federated GNN seldom protect the local node degree. 
We propose the first federated GNN framework, called \texttt{Lumos}, to protect graph statistics in a node-separated setting. We design a novel tree constructor to enhance the expressive power of the original ego network. We propose a MCMC-based algorithm to balance the workload across different devices. We design a tree-based GNN trainer to train the constructed tree with features protected. Experiments have shown that our model outperforms the state-of-art framework in accuracy and system footprints. \texttt{Lumos} satisfies the most natural graph data privacy need in mobile applications and systems, and with a great potential in the emerging decentralized applications. The authors have provided public access to their code at \url{https://github.com/panqy1998/Lumos}. 
\section*{Acknowledgement}
This work is supported in part by SJTU Global Strategic Partnership Fund (2021 SJTU-HKUST). Lingyang Chu's work is supported in part by the NSERC Discovery Grant Program, the CIHR Grant Program, and the McMaster Start-up Grant. We further thank the anonymous reviewers
for their inspiring comments, which help us to improve the
quality of this paper.

\end{document}